\title{Learning Mixtures of Unknown Causal Interventions}
\author{Abhinav Kumar \\
  LIDS, Massachusetts Institute of Technology\\
  Broad Institute of MIT and Harvard \\
  \texttt{akumar03@mit.edu} \\
  \And
  Kirankumar Shiragur \\
  Microsoft Research\\
  \texttt{kshiragur@microsoft.com} \\
  \AND
  Caroline Uhler\\
  LIDS, Massachusetts Institute of Technology\\
  Broad Institute of MIT and Harvard \\
}
\newcolumntype{Y}{>{\centering\arraybackslash}X}
\DeclarePairedDelimiterX{\norm}[1]{\lVert}{\rVert}{#1}
\newcommand{\new}[1]{#1}
\newif\ifcomments
    \newcommand{\kk}[1]{\textcolor{magenta}{#1 --Kiran}}
    \newcommand{\abhinav}[1]{\textcolor{blue}{#1 --Abhinav}}
    \newcommand{\caroline}[1]{\textcolor{red}{#1 --Caroline}}
    \newcommand{\kk}[1]{}
    \newcommand{\abhinav}[1]{}
    \newcommand{\caroline}[1]{}
\begin{document}
\maketitle
\newcommand{\refalg}[1]{Alg.~\ref{#1}}
\newcommand{\refeqn}[1]{Eq.~\ref{#1}}
\newcommand{\reffig}[1]{Fig.~\ref{#1}}
\newcommand{\reftbl}[1]{Table~\ref{#1}}
\newcommand{\refsec}[1]{\S\ref{#1}}
\newcommand{\refsubsec}[1]{\S\ref{#1}}
\newcommand{\method}[1]{\mbox{\textsc{#1}}}
\newcommand{\reflemma}[1]{Lemma~\ref{#1}}
\newcommand{\reftheorem}[1]{Theorem~\ref{#1}}
\newcommand{\refdef}[1]{Definition~\ref{#1}}
\newcommand{\refassm}[1]{Assumption~\ref{#1}}
\newcommand{\refalgo}[1]{Alg.~\ref{#1}}
\newcommand{\refexp}[1]{Ex.~\ref{#1}}
\newcommand{\refappendix}[1]{\S\ref{#1}}
\newcommand{\refline}[1]{Line~\ref{#1}}
\newcommand{\refprop}[1]{Proposition~\ref{#1}}
\newcommand{\refpara}[1]{Para~\ref{#1}}
\newcommand{\refproof}[1]{Proof~\ref{#1}}

\newcommand{\ind}{\perp\!\!\!\!\perp}


\newcommand{\reminder}[1]{}

\newcommand{\Root}{root\xspace}
\newcommand{\mixed}{mixed\xspace}
\newcommand{\stochastic}{stochastic\xspace}
\newcommand{\ouralgo}{Mixture-UTIGSP\xspace}

\newtheoremstyle{tightstyle}
  {7pt} 
  {7pt} 
  {\itshape} 
  {} 
  {\bfseries} 
  {.} 
  {1em} 
  {} 

\theoremstyle{tightstyle} \newtheorem{theorem}{Theorem}[section]
\newtheorem{claim}[theorem]{Claim}
\newtheorem{lemma}[theorem]{Lemma}
\newtheorem{corollary}{Corollary}[theorem]
\theoremstyle{tightstyle} \newtheorem{definition}{Definition}[section]
\theoremstyle{tightstyle} \newtheorem{assumption}{Assumption}[section]
\newtheorem{example}{Example}[section]
\newtheorem{proposition}{Proposition}[section]
\newtheorem*{remark}{Remark}

\begin{abstract}
The ability to conduct interventions plays a pivotal role in learning causal relationships among variables, thus facilitating applications across diverse scientific disciplines such as genomics, economics, and machine learning. However, in many instances within these applications, the process of generating interventional data is subject to noise: rather than data being sampled directly from the intended interventional distribution, interventions often yield data sampled from a blend of both intended and unintended interventional distributions. 

We consider the fundamental challenge of disentangling mixed interventional and observational data within linear Structural Equation Models (SEMs) with Gaussian additive noise without the knowledge of the true causal graph. We demonstrate that conducting interventions, whether do or soft, yields distributions with sufficient diversity and properties conducive to efficiently recovering each component within the mixture. Furthermore, we establish that the sample complexity required to disentangle mixed data inversely correlates with the extent of change induced by an intervention in the equations governing the affected variable values.
As a result, the causal graph can be identified up to its interventional Markov Equivalence Class, similar to scenarios where no noise influences the generation of interventional data.
We further support our theoretical findings by conducting simulations wherein we perform causal discovery from such mixed data.

\end{abstract}

\section{Introduction}
\label{sec:intro}


Interventions are experiments that can help us understand the mechanisms governing complex systems and also modify these systems to achieve desired outcomes \cite{MotivationExperimentEco,MotivationCausalBio,MotivationExperimentPsy}. 
For example, in causal discovery, interventions are used to infer causal relationships between variables of interest, which has applications in various fields such as biology  \cite{MotivationCausalBio,MotivationCausalBio2}, economics \cite{MotivationCausalEco2}, and psychology  \cite{MotivationCausalPsy2,MotivationExperimentPsy}. 

Given their extensive applications, there has been significant research in developing methods and experimental design strategies to conduct interventions under different scenarios~\cite{Jiaqi2023ActiveLearning,Sawarni23aLearningGoodIntv}. Despite significant efforts to develop sophisticated experimental techniques to perform interventions, they often encounter noise \cite{NoisyInterventionPsy,NoisyIntvGeneFu2013,NoisyIntvGeneWang2015}. For example, the CRISPR technology, extensively used to perform gene perturbations (or interventions), is known to have off-target effects, meaning the interventions do not always occur on the intended genes~\cite{NoisyIntvGeneFu2013,NoisyIntvGeneWang2015}. Consequently, in many applications, performing interventions generates data from a mixture of intended and unintended interventional distributions. Analyzing such \mixed data directly can lead to incorrect conclusions, adversely affecting downstream applications. Hence, it is essential to disentangle the mixture and recover the components corresponding to each individual intervention for further use in downstream tasks like causal discovery.

In our work, we formally address the challenge of disentangling mixtures of unknown interventional and observational distributions within the framework of linear structural equation models (Linear-SEM) with additive Gaussian noise.
Given iid samples from a mixture with a fixed \caroline{but unknown?} number of components as input, we present an efficient algorithm that can learn each individual component. Our results are applicable to both do and more general soft interventions. 

We chose to study our problem in the Linear-SEM with additive Gaussian noise framework for its fundamental importance in the causal discovery literature.
\citet{Shimizu06aLingam} showed that observational data is sufficient for learning the underlying causal graph when the data-generating process is a Linear-SEM with additive non-Gaussian noise with no latent confounders.
However, in the same setting with Gaussian noise, the causal graph is only identifiable up to its Markov Equivalence Class (MEC)~
\cite{causality_book_pearl,Shimizu06aLingam}. Thus, performing interventions (possibly noisy) is necessary to identify the causal graph, making it an interesting framework for our problem. 

\paragraph{Our contributions}  
First, we show that given samples from a mixture of unknown interventions within the framework of Linear-SEM with additive Gaussian noise, there exists an efficient algorithm to uniquely recover the individual components of the mixture. The sample complexity of our procedure scales polynomially with the dimensionality of the problem and inversely polynomially with the accuracy parameter and the magnitude of changes induced by each intervention.
Our findings indicate that the recovery error for each individual interventional distribution approaches zero as the number of samples increases. Therefore, in the infinite sample regime, we can recover the true interventional distributions, even when the targets of the interventions are unknown. Second, if the input distributions satisfy a strong interventional faithfulness assumption (as defined in \citet{Squires2020UTGSP}), we can utilize the results from \cite{Squires2020UTGSP} to identify the targets of the interventions, thereby enabling causal discovery using these accurately recovered interventional distributions.  Finally, we conduct a simulation study to validate our theoretical findings. We show that as sample size increases, one can recover the mixture parameters, identify the unknown intervention targets, and learn the underlying causal graph with high accuracy. 

\section{Prior Work}
\label{sec:prior_work}

\paragraph{Mixture of DAGs and Interventions.}
There has been a lot of interest in understanding the mixture distribution arising from a collection of directed acyclic graphs (DAGs) \cite{Strobl2022Mixture,saeed20amixture,varici2024mixture,kumar21amixture,gordon2023identification,Thiesson1998Mixture}. 
\citet{saeed20amixture} studied the distribution arising from a mixture of multiple DAGs with common topological order, thus a generalization of our problem. Their method identifies the variables whose conditional distribution across DAGs varies. However, there is no theoretical guarantee for the identifiability of the mixture's components.  
Like us, \citet{kumar21amixture} also studied the mixture arising from interventions on a causal graph and gives an algorithm to identify the mixture component. However, they assume knowledge of the correct topological order and sample access to the observational distribution. \citet{Thiesson1998Mixture} also studied the problem of learning a mixture of DAGs using the Expectation-Maximization (EM) framework. However, there is no theoretical guarantee about the identifiability of individual components.

\paragraph{Learning Mixture of Gaussians.}
Learning a mixture of Gaussians is a heavily studied problem~\cite{Belkin2010PolynomialLO,RongGe2015Mixture,Moitra2010SettlingTP,Kalai2009Mixture}. 
There exist efficient algorithms both in terms of runtime and sample complexity for a fixed number of components in the mixture~\cite{Kalai2009Mixture,Moitra2010SettlingTP,Belkin2010PolynomialLO}. \citet{RongGe2015Mixture} gave an efficient algorithm for the case when the number of components is almost $\sqrt{n}$ where $n$ is the dimension of the variables. However, this method only guarantees identifiability in the perturbative setting, where the true parameters are randomly perturbed before the samples are generated. 

\paragraph{Causal Discovery with Unknown Interventions.} Recent years have seen the development of methods that perform causal discovery with observational and multiple unknown interventional data~\cite{MooijJCI2020,Squires2020UTGSP,Karthikeyan2020UnknownInt}. \citet{Squires2020UTGSP} takes multiple but segregated datasets from unknown interventional distributions and aims to identify the unknown interventions and learn the underlying causal graph up to its interventional MEC (I-MEC). 
\citet{Karthikeyan2020UnknownInt} considers the same problem of causal discovery with unknown soft interventions in non-Markovian systems (i.e., latent common cause models). However, they also assume that the interventional and/or observational data is already segregated. 
To our knowledge, no prior works consider directly learning the causal graph from a mixture of interventions. 
\section{Notation and Preliminaries}
\label{sec:prelim}
\paragraph{Notation.} We use the upper-case letter $X$ to denote a random variable and lower-case $``x"$ to denote the value taken by the random variable $X$. Let, the uppercase bold-face letter $\bm{X}$ denote a set of random variables and the lowercase bold-face letter $\bm{x}$ denote the corresponding value taken by $\bm{X}$. Let the conditional probability function $\mathbb{P}(\bm{X}=\bm{x}|\bm{Y}=\bm{y})$ be denoted by $\mathbb{P}({\bm{x}|\bm{y}})$. We use the calligraphic letter $\mathcal{S}$ to denote a set and $|\mathcal{S}|$ to denote the cardinality of the set $\mathcal{S}$. Let [n] denote the set of natural numbers $\{1,\ldots,n\}$. For any vector $\bm{v}$, we use the notation $[\bm{v}]_{j}$ to denote it's $j^{th}$ entry and for any matrix $M$ we use $[M]_{i,j}$ to denote the entry in the $i^{th}$ row and $j^{th}$ column of $M$.  We use $\mathbb{R}_{+}$ to denote positive scalars.



\paragraph{Structural Equation Model (SEM)}
Following Definition 7.1.1 in \citet{causality_book_pearl}, let a causal model (or SEM) be defined by a 3-tuple $\mathcal{M}=\langle \bm{V}, \bm{U}, \mathcal{F} \rangle$, where $\bm{V}=\{V_1,\ldots,V_n\}$ denotes the set of observed (endogenous) variables and $\bm{U}=\{U_1,\ldots,U_n\}$ denotes the set of unobserved (exogenous) variables that represent noise, anomalies or assumptions. Next, $\mathcal{F}$ denotes a set of $n$ functions $\{f_1,\ldots,f_n\}$, each describing the causal relationships between the random variables having the form:
\begin{equation*}
\begin{aligned}
    v_i = f_{i}(\bm{pa}(\bm{V}_i),\bm{u}_i),
\end{aligned}
\end{equation*}
where $\bm{U}_i \subseteq \bm{U}$ and $\bm{pa}(V_{i})\subseteq \bm{V}$ are such that the associated causal graph (defined next) is acyclic. A causal graph $\mathcal{G}_{\mathcal{M}}$ \footnote{We will drop the subscript $\mathcal{M}$ when it is clear from context} is a directed acyclic graph (DAG), where the nodes are the variables $\bm{V}$ and the edges $\bm{U}$ with edges pointing from $\bm{pa}(V_i)$ to $V_i$ for all $i\in[n]$.

\paragraph{Linear-SEM (with causal sufficiency)}
In this work, we study a special class of such causal models (Gaussian Linear-SEMs) where the function class of each $f_{i}$ is restricted to be linear and of the form
\begin{equation*}
\begin{aligned}
    v_i = f_{i}(\bm{pa}(V_i),\bm{u}_i) = \sum_{v_j \in \bm{pa}(V_i)} \alpha_{ij}v_j + u_i,
\end{aligned}
\end{equation*}
where $\alpha_{ij}\neq0, \forall V_j \in \bm{pa}(V_i)$. The \emph{causal Sufficiency} assumption states that $\bm{U}_i = \{U_i\}$, i.e., $U_i$ is the only exogenous variable that causally affects the endogenous variable $V_i$. This is equivalent to the absence of any latent confounder (Chapter 9 in \cite{Jonas2017ElementsOfCausalInference}). In our work, we consider causally-sufficient Linear-SEMs; with a slight abuse of nomeclature, we will call them Linear-SEMs. The functional relationship between the exogenous and endogenous variables is deterministic, and the system's stochasticity comes from a probability distribution over the exogenous noise variables $\bm{U}$. Thus, the probability distribution over the exogenous variable $\mathbb{P}(\bm{U})$ defines a probability distribution over the endogenous variable  $\mathbb{P}(\bm{V})$. Without loss of generality, let the nodes $\{V_1,\ldots,V_n\}$ of the underlying causal graph be topologically ordered.
Then, we can equivalently write the above set of equations as:
\begin{equation}
\label{eqn:linear_sem}
\begin{aligned}
    \bm{v} = A\bm{v} + \bm{u} \implies \bm{v}=(I-A)^{-1}\bm{u},
\end{aligned}
\end{equation}
where $A$, with $A_{ij}=\alpha_{ij}$,contains the causal effects between the endogenous variables. Thus, the matrix $A$, hereafter described as the adjacency matrix, characterizes the causal relationships between the endogenous variables ($\bm{V}$), where $A_{ij}\neq0$ denotes an edge between the variable $V_i$ and $V_j$ in $\mathcal{G}$.

\paragraph{Linear-SEM with additive Gaussian Noise}
We further specialize the exogenous variable $u_i$ (henceforth referred to as noise variable) to be Gaussian with mean $\mu_i$ and variance $\sigma_i$, i.e. $u_i \sim \mathcal{N} (\mu_i,\sigma_i)$. 
Thus, the joint distribution of the exogenous variables is given by a multivariate Gaussian distribution $\bm{u}\sim \mathcal{N}(\bm{\mu},D)$ where $[\bm{\mu}]_{i}=\mu_i$ and the covariance is given by a diagonal matrix $D$ with $[D]_{ii}=\sigma_i$.
Thus, the endogenous variables also follow a multivariate Gaussian distribution with $\mathbb{P}(\bm{v}) = \mathcal{N}(\bm{m}, S)$, where $\bm{m}\triangleq B\bm{\mu}_i$ $S\triangleq BDB^{T}$ and $B\triangleq(I-A)^{-1}$.
Causal discovery aims to identify the unknown adjacency matrix $A$ given observational or other auxiliary data. 

\paragraph{Interventions.} 
Following Definition 7.1.2 from \citet{causality_book_pearl}, the new causal model describing the interventional distribution, where the variables in a set $\bm{I}$ are set to a particular value, is given by $M_{\bm{I}}=\langle \bm{U},V,\mathcal{F}_{\bm{I}} \rangle$, where $\mathcal{F}_{\bm{I}}=\{f_j: V_j\notin \bm{I}\} \cup \{ f_i': V_i \in \bm{I}\}$ and the functional relationship of every node $V_i\in \bm{I}$ with their parents and corresponding exogenous variable $U_i$ is changed from $f_i$ to $f_i'$.  
In particular, the functional relationship of node $V_i$ is changed to 
\begin{equation}
    v_i = \sum_{v_j\in \bm{pa}(V_i)} \alpha_{ij}{'} v_j + u_i{'}
\end{equation}
where $u_{i}{'}\sim \mathcal{N}(\mu_{i}{'},\sigma_{i}{'})$.
Such interventions are broadly referred to as ``soft". Several other kinds of interventions are also defined in the literature, e.g., \emph{do, uncertain, soft} etc. \cite{bayesian_structure_eton_2007}. 
We consider three different types of widely studied specializations of soft interventions in our work:
\vspace{-0.2cm}
\begin{enumerate}
    \item[(1)] \emph{shift}: the mean of the noise distribution is shifted by a particular value, i.e., $\mu_i{'}=\mu_i+\kappa$ for some $\kappa\in \mathbb{R}$, and everything else remains the same, i.e., $\sigma_i'=\sigma_i$ and $a_{ij}{'}=a_{ij}, \forall j\in[n]$.
    \vspace{-0.1cm}
    \item[(2)] \emph{stochastic do} (henceforth referred as \emph{\stochastic}): where all the incoming edges from parents are broken, i.e., $\alpha_{ij}{'}=0$, and $u_{i}{'}\sim \mathcal{N}(\mu_{i}{'},\sigma_{i}{'})$.
    \vspace{-0.1cm}
    \item[(3)] \emph{do}: in addition to breaking all incoming edges, i.e., $\alpha_{ij}{'}=0$, we also set the variance of the noise distribution to 0 and the mean to any value of choice, i.e., $u_i \sim \mathcal{N}(\mu_i{'},0)$.
\end{enumerate}
\vspace{-0.3cm}

\paragraph{Atomic Interventions}
In this work, we consider soft interventions where only one node is intervened at a time, i.e., $|\bm{I}|=1$. Thus after a soft intervention on node $V_i$, the adjacency matrix is modified such that $A_{i} \triangleq A-\bm{e}_{i}(\bm{a}_{i}-\bm{a}_{i}')^{T}=A-\bm{e}_{i}\bm{c}_{i}^{T}$ \footnote{We have used the subscript ``i" notation in $A_i$ to denote the adjacency matrix of the intervened distribution. We will use a similar subscript notation to represent other variables related to intervened distributions}, where $\bm{c}_{i}^{T}\triangleq (\bm{a}_{i}-\bm{a}_{i}')^{T}$, $\bm{a}_{i}^{T}$ is the $i^{th}$ row of matrix A and $\bm{a}_{i}^{'T}$ is the new row after intervention such that $[\bm{a}_{i}]_{k}=0,\forall k\geq i$,
and $\bm{e}_{i}$ is the unit vector with entry $1$ at the $i^{th}$ position and $0$ otherwise.
Thus, the linear SEM from \refeqn{eqn:linear_sem} is:
\begin{equation}
\label{eqn:intervened_linear_sem}
\begin{aligned}
    \bm{v}_{i}= (I-A_{i})^{-1}\bm{u}_{i} =  (I-A+\bm{e}_{i}\bm{c}_{i}^{T})^{-1}\bm{u}_{i},
\end{aligned}
\end{equation}
where $\bm{u}_{i}\sim \mathcal{N}(\bm{\mu}_i,D_{i})$, $\bm{\mu}_{i}=\bm{\mu}+\gamma_i\bm{e}_i$ for some $\gamma_i\in \mathbb{R}$, $D_{i}= D - \delta_i \bm{e}_{i}\bm{e}_{i}^{T}$ where $ \delta_i \triangleq(\sigma_{i}-\sigma_{i}^{'})$, is a diagonal matrix with the $i^{th}$ diagonal entry as $\sigma_{i}^{'}$ and rest is same as $D$.
Thus, the interventional distribution of the endogenous variables is also a multivariate Gaussian distribution, i.e., $\mathbb{P}_{i}(\bm{V}) = \mathcal{N}(\bm{m}_{i}, S_{i})$ where $\bm{m}_i\triangleq B_i\bm{\mu}_i$, $S_i\triangleq B_{i}D_iB_{i}^{T}$ and $B_{i}\triangleq (I-A+\bm{e}_{i}\bm{c}_{i}^{T})^{-1}$.


\section{Problem Formulation and Main Results}
\label{sec:problem_formulation_main_theorem}
In \refsec{subsec:learning_mixture}, we begin by formulating the problem of learning the mixture of interventions and then state our main result on the identifiability of parameters of the mixture. As a consequence of our identifiability result, under an interventional faithfulness assumption (\citet{Squires2020UTGSP}),  we show in \refsec{subsec:causal_discovery_imec} that the underlying true causal graph can be identified up to its I-MEC using a mixture of unknown interventions, thereby obtaining the same identifiability results as in the unmixed setting.

\subsection{Learning a Mixture of Interventions}
\label{subsec:learning_mixture}

We begin by formally defining the mixture of interventions over Linear-SEM with additive Gaussian noise and then state the main result of our paper --- a mixture of interventions can be uniquely identified under a mild assumption discussed below.
\begin{definition}[Mixture of Soft Atomic Interventions]
\label{def:mixture_of_intervention}
Let $\mathcal{M}=\langle \bm{V},\bm{U},\mathcal{F}\rangle$ be an unknown Gaussian Linear SEM where the distribution of the endogenous variables is given by $\mathbb{P}(\bm{V})$ (see \refsec{sec:prelim}). Let $\mathcal{I}=\{i_1,\ldots,i_K\}$, hereafter referred to as intervention target set, be a set of unknown soft atomic interventions where each $i_k$ generates a new interventional distribution $\mathbb{P}_{i}(\bm{V})$. Then, the mixture of soft atomic intervention is defined as:
\begin{equation}
\begin{aligned}
    \mathbb{P}_{mix}(\bm{V}) = \sum_{i_k \in \mathcal{I}} \pi_{i_k} \mathbb{P}_{i_k}(\bm{V})
\end{aligned}
\end{equation}
where $\pi_{i_k}\in \mathbb{R}_{+}$, henceforth referred to as mixing weight, is a positive scalar such that $\sum_{i_k\in\mathcal{I}}\pi_{i_k}=1$, $i_k\in[n]\cup \{0\}$ where $n=|\bm{V}|$ is the number of endogenous variables. We also allow $i_k=0$, which denotes the setting when none of the nodes is intervened, i.e., $P_{0}(\bm{V})\triangleq\mathbb{P}(\bm{V})$. Using \refeqn{eqn:linear_sem} and \ref{eqn:intervened_linear_sem}, a mixture defined over a linear SEM with additive Gaussian noise is a mixture of Gaussians with parameters $\theta = \{(\bm{m}_{i_k},S_{i_k},\pi_{i_k})\}_{i_k \in \mathcal{I}}$ where $\mathbb{P}_{i_k}(\bm{V}) = \mathcal{N}(\bm{m}_{i_k},S_{i_k})$.
\end{definition}
Having defined a mixture of interventions, we then aim to answer the following questions:
\begin{enumerate*}
    \item[(1)] Does there exist an algorithm that can uniquely identify the parameters ($\theta$) of the mixture of interventions under an infinite sample limit?
    \item[(2)] What is the run time and sample complexity of such an algorithm? 
\end{enumerate*}
It is immediate that if the intervention doesn't change the causal mechanism in any way, then the interventional distribution is equal to the observational distribution, and we would not be able to distinguish between them. This discussion suggests that it is necessary to put an additional constraint on the interventions performed.
Below, we formally state the assumption that will ensure this and that it is sufficient for the identifiability of mixture distribution. \abhinav{see if this is necessary for the Belkins result}
\begin{assumption}[Effective Intervention]
\label{assm:effectiveness}
Let $\mathbb{P}_i(\bm{V})=\mathcal{N}(\bm{m}_i,S_i)$ be an interventional distribution after intervening on node $v_i$, where $\bm{m}_i=B_i\bm{\mu_i}=B_i(\bm{\mu}+\gamma_k\bm{e}_i)$, $B_{i}=(I-A+\bm{e}_i\bm{c}_i^{T})$ and $\bm{c}_i=(\bm{a}_i-\bm{a}_i')$, $S_k=B_iD_iB_i^{T}$ and $D_i = D - \delta_i \bm{e}_i \bm{e}_i^{T}$  (see atomic intervention paragraph in \refsec{sec:prelim}). Then, at least one of the following holds: $\gamma_i \neq 0$ or $\norm{\bm{c}_i}\neq 0$ or $\delta_i\neq 0$.  
\end{assumption}
Now, we are ready to state the main result of our work that will help us answer the above questions. For an exact expression of sample complexity and runtime, see \reflemma{lemma:param_separation}. 

\begin{theorem}[Identifiability of Mixture Parameters] 
\label{theorem:main_identifiability}
Let $\mathbb{P}_{mix}(\bm{V})$ be a mixture of soft atomic interventions defined over a Linear-SEM with additive Gaussian noise with $``n"$ endogenous variables (\refdef{def:mixture_of_intervention}) such that the number of components $|\mathcal{I}|$ is fixed. 
Given \refassm{assm:effectiveness} is satisfied, then there exists an efficient algorithm that runs in time polynomial in $n$, requires 
$poly\Big{(}n,\frac{1}{\epsilon},\frac{1}{\delta},\frac{1}{\operatorname{min}\big{(}\big{\{}poly(\norm{\bm{c}_{i_k}},\delta_i,\gamma_i,\frac{1}{\norm{A}_{F}})\big{\}}_{i_k\in\mathcal{I}}\big{)}}\Big{)}$ 
samples where $A$ is the adjacency matrix of underlying graph and with probability greater than $(1-\delta)$ recovers the mixture parameters $\hat{\theta} = \{(\hat{\bm{m}}_{1},\hat{S}_{1},\hat{\pi}_{1}),\ldots, (\hat{\bm{m}}_{|\mathcal{I}|},\hat{S}_{|\mathcal{I}|},\hat{\pi}_{|\mathcal{I}|})\}$ such that
\begin{equation*}
    \sum_{i_k \in \mathcal{I}} \Big{(} \norm{\bm{m}_{i_k}-\hat{\bm{m}}_{\rho(i_k)} }^{2} + \norm{S_{i_k}-\hat{S}_{\rho(i_k)} }^{2} + |\pi_{i_k}-\hat{\pi}_{\rho(i_k)}|^{2}  \Big{)} \leq \epsilon^{2}
\end{equation*}
for some permutation $\rho:\{1,2,\ldots,|\mathcal{I}|\}\rightarrow \{1,2,\ldots,|\mathcal{I}|\}$ and arbitrarily small $\epsilon>0$.
\end{theorem}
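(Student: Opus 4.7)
The plan is to reduce the problem to learning a mixture of multivariate Gaussians with a fixed number of components, apply an existing polynomial-time algorithm for that task as a black box, and do the real work in establishing the required separation between components using \refassm{assm:effectiveness}. First, observe that each component $\mathbb{P}_{i_k}(\bm{V})$ is multivariate Gaussian by construction, so the mixture $\mathbb{P}_{mix}(\bm{V})$ is a Gaussian mixture with $K=|\mathcal{I}|$ fixed. Existing algorithms \citep{Moitra2010SettlingTP,Belkin2010PolynomialLO} recover all $(\hat{\bm{m}}_{i_k},\hat{S}_{i_k},\hat{\pi}_{i_k})$ up to a permutation $\rho$ in time and samples polynomial in $n$, $1/\epsilon$, $1/\delta$, and inversely polynomial in the minimum pairwise separation $\Delta$ between components. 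Hence, once I establish a lower bound of the form $\Delta \geq \min_{i_k}\mathrm{poly}(\norm{\bm{c}_{i_k}},\delta_{i_k},\gamma_{i_k},1/\norm{A}_F)$, the black-box algorithm immediately yields the claimed complexity and accuracy guarantee for some permutation $\rho$.

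The technical core is therefore proving that separation bound, which is the substance of the referenced \reflemma{lemma:param_separation}. Fix two intervention labels $i_k\neq i_{k'}$, allowing either to be $0$ for the observational component. Using the Sherman--Morrison identity I would expand
\begin{equation*}
B_{i_k} \;=\; B \;-\; \frac{B\,\bm{e}_{i_k}\,\bm{c}_{i_k}^{T} B}{1+\bm{c}_{i_k}^{T} B\,\bm{e}_{i_k}},
\end{equation*}
and analogously for $B_{i_{k'}}$, and substitute into $\bm{m}_{i_k}=B_{i_k}(\bm{\mu}+\gamma_{i_k}\bm{e}_{i_k})$ and $S_{i_k}=B_{i_k}(D-\delta_{i_k}\bm{e}_{i_k}\bm{e}_{i_k}^{T})B_{i_k}^{T}$. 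This yields closed forms for $\bm{m}_{i_k}-\bm{m}_{i_{k'}}$ and $S_{i_k}-S_{i_{k'}}$ as rank-one perturbations of $B$ plus additive mean-shift and variance-change contributions. Because the nodes are topologically ordered, $A$ is strictly upper triangular and $\bm{e}_{i_k}^{T}B\bm{e}_{i_k}=1$, so the Sherman--Morrison denominator is well controlled. A short case analysis then covers (i) the two labels intervening on the same node with different parameters, (ii) the labels intervening on different nodes, and (iii) one label being observational; in each case \refassm{assm:effectiveness} guarantees at least one of $\gamma_{i_k},\norm{\bm{c}_{i_k}},\delta_{i_k}$ is nonzero, and that nonzero quantity propagates to an unavoidable lower bound on $\norm{\bm{m}_{i_k}-\bm{m}_{i_{k'}}}^{2}+\norm{S_{i_k}-S_{i_{k'}}}_{F}^{2}$ of the advertised polynomial form.

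The main obstacle is ruling out cancellation. An intervention can simultaneously change $\gamma_i$, $\bm{c}_i$, and $\delta_i$, and all three changes are routed through the same matrix $B$, so there is no a priori reason the mean shifts and covariance changes from two different interventions cannot partially cancel to produce components that are nearly indistinguishable. Controlling this requires exploiting the rank and support structure of each perturbation: since $\bm{e}_{i_k}\bm{c}_{i_k}^{T}$ populates only row $i_k$ of $A_{i_k}$, one can isolate the ``intrinsic'' effect on coordinate $i_k$ of $\bm{m}$ and $S$ from its downstream propagation. The factor $1/\norm{A}_F$ enters precisely at this step: large graph coefficients allow a row-$i_k$ perturbation of $B$ to propagate globally, which is the mechanism by which cancellations become ``near misses'' and forces the separation bound to shrink polynomially with $\norm{A}_F$. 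Once this quantitative separation is in hand, the black-box mixture-learning step is routine and produces the required permutation $\rho$ and accuracy $\epsilon$ with probability at least $1-\delta$.
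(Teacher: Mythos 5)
Your proposal follows essentially the same route as the paper: reduce to learning a fixed-size Gaussian mixture via \citet{Belkin2010PolynomialLO} (Theorem~\ref{theorem:belkin_restatement}), and concentrate the work in a pairwise separation bound on $\norm{\bm{m}_i-\bm{m}_j}^2+\norm{S_i-S_j}_F^2$ (Lemma~\ref{lemma:param_separation}) derived by Sherman--Morrison expansion of $B_{i}$, with the anti-cancellation argument resting on the support structure of the rank-one perturbation under a topological ordering. The paper realizes the ``isolation'' step you describe concretely by conjugating the differences by $B^{-1}$ and reading off rows $i$ and $j$, where $\bm{c}_j^{T}B\bm{e}_i=0$ for $V_j\prec V_i$ kills the cross terms; this matches your plan, so the approaches coincide.
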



\subsection{Causal Discovery with Mixture of Interventions}
\label{subsec:causal_discovery_imec}
\reftheorem{theorem:main_identifiability} helps us separate the mixture of interventions $\mathbb{P}_{mix}(\bm{V})$ and provides us with the parameters $\{(\bm{m}_i,S_i)\}_{i\in\mathcal{I}}$ of the distribution of all the components in the mixture. However, it does not reveal which nodes were intervened, corresponding to the different components recovered from the mixture. There has been recent progress in performing causal discovery with a disentangled set of unknown interventional distributions~\cite{Squires2020UTGSP,MooijJCI2020}. Specifically, \citet{Squires2020UTGSP} proposes an algorithm (UT-IGSP) that greedily searches over the space of permutations to determine the I-MEC and the unknown intervention target of each component. UT-IGSP is consistent, i.e., it will output the correct I-MEC as the sample size goes to infinity. Thus, combining \reftheorem{theorem:main_identifiability} with the UT-IGSP algorithm implies that, as sample size goes to infinity, we can recover the underlying causal graph up to its I-MEC  given a mixture of interventions over a Linear-SEM with additive Gaussian noise:

\begin{corollary}[Mixture-MEC]
\label{prop:mixute_mec}
Given samples from a mixture of interventions $\mathcal{P}_{mix}(\bm{V})$ over a linear-SEM with additive Gaussian noise and samples from the observational distribution $\mathbb{P}(\bm{V})$, there exists a consistent algorithm that will identify the I-MEC of the underlying causal graph under the $\mathcal{I}$-faithfulness assumption (defined in \citet{Squires2020UTGSP}) (and restated in \refappendix{app:strong_faithfullness_assm}).
\end{corollary}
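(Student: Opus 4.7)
The plan is to use \reftheorem{theorem:main_identifiability} as a black-box preprocessing step that reduces the mixture setting to the standard disentangled unknown-target setting studied by \citet{Squires2020UTGSP}, and then invoke the consistency guarantee of their UT-IGSP algorithm.

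Concretely, I would first apply the algorithm of \reftheorem{theorem:main_identifiability} to $N$ i.i.d.\ samples drawn from $\mathbb{P}_{mix}(\bm{V})$. As $N \to \infty$, one can drive the accuracy parameter $\epsilon$ and the failure probability $\delta$ to zero, so the returned estimates $\{(\hat{\bm{m}}_{i_k}, \hat{S}_{i_k}, \hat{\pi}_{i_k})\}_{i_k \in \mathcal{I}}$ converge in probability (up to a fixed permutation $\rho$) to the true component parameters. Since each component is Gaussian, this implies convergence in total variation of the estimated component distributions $\hat{\mathbb{P}}_{i_k} = \mathcal{N}(\hat{\bm{m}}_{i_k}, \hat{S}_{i_k})$ to the true interventional distributions $\mathbb{P}_{i_k}(\bm{V})$. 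I would then draw fresh samples from each $\hat{\mathbb{P}}_{i_k}$ (or, equivalently, compute partial correlations directly from $\hat{S}_{i_k}$) and, together with samples from the separately given observational distribution $\mathbb{P}(\bm{V})$, feed the resulting collection of disentangled datasets into UT-IGSP. By the consistency result of \citet{Squires2020UTGSP}, under the $\mathcal{I}$-faithfulness assumption UT-IGSP outputs the correct I-MEC of the underlying graph together with the unknown intervention targets in the large-sample limit.

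The main obstacle is the interface between the two results: \reftheorem{theorem:main_identifiability} returns estimated parameters up to an arbitrary relabeling $\rho$, whereas UT-IGSP expects a collection of disentangled datasets whose targets are treated as latent. The permutation is harmless because UT-IGSP never relies on any a priori labeling of components. The only technical care required is to verify that the finite-sample error in $\hat{S}_{i_k}$ propagates benignly through the partial-correlation tests that UT-IGSP uses to decide conditional independence; this follows from the continuity of Schur complements, so conditional-independence relations among multivariate Gaussians depend continuously on the covariance matrix, and $\mathcal{I}$-faithfulness ensures a strictly nonzero margin around every true (conditional) correlation. Consequently, every individual test decision stabilizes to its population value as $N \to \infty$, and UT-IGSP returns the true I-MEC with probability tending to one, yielding the claimed consistent algorithm.
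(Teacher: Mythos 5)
Your proposal is correct and takes essentially the same route as the paper: the paper's own proof is a two-line observation that the corollary follows from the identifiability of the mixture parameters (\reftheorem{theorem:main_identifiability}) combined with the consistency of UT-IGSP under the $\mathcal{I}$-faithfulness assumption from \citet{Squires2020UTGSP}. Your additional discussion of the permutation ambiguity and the stability of the conditional-independence tests merely fills in interface details that the paper leaves implicit.
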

\begin{proof}
The proof follows from the identifiability of the parameters of the mixture distribution (\reftheorem{theorem:main_identifiability}) and the consistency of UT-IGSP given by \citet{Squires2020UTGSP}.
\end{proof}

\begin{remark}
The $\mathcal{I}$-faithfulness assumption imposes certain restrictions on both observational and interventional distributions. However, as noted by \citet{Squires2020UTGSP}, in the case of Linear Gaussian distributions, the set of distributions excluded by this assumption is of measure zero. This is because the Linear Gaussian distributions, defined by a matrix $A$, that do not meet this assumption are subject to multiple polynomial constraints of the form 
$p(A)=0$. It is a well-known result that for a random matrix $A$, the set of matrices that satisfy such polynomial equalities has measure zero \cite{poly_eqn_measure_zero}.
\end{remark}
\section{Proof Sketch of  \reftheorem{theorem:main_identifiability}}
\label{sec:proof_main}
Here we provide an overview of the proof of \reftheorem{theorem:main_identifiability}.
\refdef{def:mixture_of_intervention} tells us that the mixture of interventions defined over a Linear-SEM with additive Gaussian noise is a mixture of Gaussians. Learning mixtures of Gaussian is well-studied in the literature \cite{Belkin2010PolynomialLO,Moitra2010SettlingTP,RongGe2015Mixture}. Since most of these approaches require some form of separability between the distribution of components or the parameters of the distributions, in the following lemma we will first show that the covariance matrix and the mean of any interventional or observational distribution taken pairwise is \emph{well-separated} when 
\refassm{assm:effectiveness} holds. In particular, this \emph{seperation} of parameters will ensure that the Gaussian mixture can be uniquely identified using the results from \citet{Belkin2010PolynomialLO}.

\begin{restatable}{lemma}{paramsep}[Parameter Separation]
\label{lemma:param_separation}
Let $\mathbb{P}_{0}(\bm{V})$ denote the observational distribution of a linear SEM with additive Gaussian noise $``\mathcal{M}"$ (see \refsec{sec:prelim}) with ``n" endogenous variables.
For some $i,j\in [n]\cup \{0\}$, let $\mathbb{P}_{i}(\bm{V})=\mathcal{N}(\bm{m}_{i},S_{i})$ and $\mathbb{P}_{j}(\bm{V})=\mathcal{N}(\bm{m}_{j},S_{j})$ be two interventional distributions (observational if one of $``i"$ or $``j"$ =0). Then the separation between covariance $S_i$ and $S_j$ and mean $\bm{m}_i$ and $\bm{m}_j$ is lower bounded by:
\begin{equation*}
\begin{aligned}
    \norm{S_i-S_j}^{2}_{F} + \norm{\bm{m}_i-\bm{m}_j}_{F}^{2} \geq \>& f(B,D) \Big{(}\norm{\bm{c}_{i}}^{2}+\norm{\bm{c}_{j}}^{2}\Big{)} + h(B,D,\bm{\mu}) \Big{(} \gamma_i^{2} + \gamma_j^{2} \Big{)}\\
    &+  g(B)\Big{(}|\delta_{i}|\operatorname{min}\big{(}|\delta_i|,\lambda_{min}(D)\big{)}+|\delta_j|\operatorname{min}\big{(}|\delta_j|,\lambda_{min}(D)\big{)}\Big{)},
\end{aligned}
\end{equation*}
where after intervention on node $k\in\{i,j\}$, $\norm{\bm{c}_k}$ is the norm of the perturbation (or change) in the $k^{\text{th}}$ row of the adjacency matrix, $\gamma_k$ is the perturbation in the mean and $|\delta_k|$ is the perturbation in the variance of the noise distribution of node $k$ (as defined in the Atomic Intervention paragraph of  \refsec{sec:prelim}).
Also, $B=(I-A)^{-1}$, and $D$ and $\bm{\mu}$ are the covariance matrix and mean of the noise distribution in $\mathcal{M}$, respectively. Furthermore, $``f",``g"$, and $``h"$ are positive valued polynomial functions of $B$, $\bm{\mu}$ and smallest eigenvalue of $D$
(see the proof in \refappendix{app:param_sep_proof} for the exact expressions).
\end{restatable}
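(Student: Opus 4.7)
The plan is to first obtain a tractable closed form for the post-intervention propagation matrix via Sherman--Morrison. Because under the topological ordering $B=(I-A)^{-1}$ is lower triangular with unit diagonal and $\bm{c}_{i}$ is supported on indices strictly less than $i$, the scalar $\bm{c}_{i}^{T}B\bm{e}_{i}$ vanishes, giving the clean identity $B_{i}=B-B\bm{e}_{i}\bm{c}_{i}^{T}B$. Two structural consequences will drive the rest of the argument: (i) $B_{i}\bm{e}_{i}=B\bm{e}_{i}$, so the $i$-th column of $B_{i}$ coincides with that of $B$; and (ii) $\bm{e}_{k}^{T}B_{i}=\bm{e}_{k}^{T}B$ for every $k<i$, so all rows strictly above $i$ of $B_{i}$ equal those of $B$. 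An analogous expansion holds for $B_{j}$.

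With these identities I would expand $S_{i}=B_{i}D_{i}B_{i}^{T}$ and $\bm{m}_{i}=B_{i}(\bm{\mu}+\gamma_{i}\bm{e}_{i})$ around their observational counterparts $BDB^{T}$ and $B\bm{\mu}$, carefully separating the pieces that are linear or quadratic in $\bm{c}_{i}$, $\gamma_{i}$, and $\delta_{i}$ respectively, then do the same for $j$ and assemble $S_{i}-S_{j}$ and $\bm{m}_{i}-\bm{m}_{j}$. Assume WLOG $i<j$ (the case where one index equals $0$ is a simpler sub-case). The perturbations induced by intervening at $i$ are localized to row/column $i$ and its descendants, while those from intervening at $j$ live on row/column $j$ and its descendants; this localization lets me isolate each scalar parameter by projecting $S_{i}-S_{j}$ and $\bm{m}_{i}-\bm{m}_{j}$ onto carefully chosen coordinate subspaces.

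Concretely, to extract $\norm{\bm{c}_{i}}^{2}$, observe that row $i$ of $B_{i}-B_{j}$ equals $-\bm{c}_{i}^{T}B$ because the $j$-intervention leaves row $i$ untouched (since $i<j$), and $\norm{\bm{c}_{i}^{T}B}^{2}\ge \sigma_{\min}(B)^{2}\norm{\bm{c}_{i}}^{2}$; propagating this through the $B_{\cdot}D_{\cdot}B_{\cdot}^{T}$ structure and tracking the surviving row-$i$ block of $S_{i}-S_{j}$ yields an $f(B,D)\norm{\bm{c}_{i}}^{2}$ contribution, with a symmetric argument for $\bm{c}_{j}$. To extract $\gamma_{i}^{2}$, the mean difference $\bm{m}_{i}-\bm{m}_{j}$ contains the pure components $\gamma_{i}B\bm{e}_{i}-\gamma_{j}B\bm{e}_{j}$ plus corrections tied to $\bm{c}_{\cdot}^{T}B\bm{\mu}$; projecting onto the direction dual to $B\bm{e}_{i}$ inside $\mathrm{span}(B\bm{e}_{i},B\bm{e}_{j})$ isolates $\gamma_{i}$, giving the $h(B,D,\bm{\mu})$ factor. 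To extract the $\delta_{i}$-contribution I would combine the $(i,i)$-entry of $S_{i}-S_{j}$, which using property (ii) reduces at leading order to $-\delta_{i}$, with suitable off-diagonal $(i,k)$-entries that cancel the residual $\bm{c}_{i}$-cross terms arising in the diagonal comparison; the resulting scalar behaves like $\delta_{i}^{2}$ in the small-$\delta_{i}$ regime but saturates at $|\delta_{i}|\lambda_{\min}(D)$ once $D_{i}$ becomes (nearly) rank-deficient (the do-intervention regime), which is exactly the $|\delta_{i}|\operatorname{min}(|\delta_{i}|,\lambda_{\min}(D))$ shape of the bound.

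The main technical obstacle is controlling the cross terms between the $i$- and $j$-perturbations: when both interventions are active, $S_{i}-S_{j}$ couples $\bm{c}_{i},\gamma_{i},\delta_{i}$ with $\bm{c}_{j},\gamma_{j},\delta_{j}$, and a priori these could conspire to cancel along adversarial directions. I would handle this with a Cauchy--Schwarz / AM--GM split that preserves at least half of each diagonal contribution, absorbing the cross terms into an extra constant factor inside $f$, $g$, and $h$. All three functions remain strictly positive by the invertibility of $B$ (guaranteed by acyclicity) and the positive-definiteness of $D$, so explicit expressions in terms of $\sigma_{\min}(B)$, $\lambda_{\min}(D)$, and $\norm{\bm{\mu}}$ can be tracked through the calculation and reported in the appendix.
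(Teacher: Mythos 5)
Your setup is the same as the paper's: Sherman--Morrison with $\bm{c}_i^{T}B\bm{e}_i=0$ gives $B_i=B-B\bm{e}_i\bm{c}_i^{T}B$, and conjugating the covariance difference by $B^{-1}$ and reading off rows $i$ and $j$ is exactly how the paper extracts the $\norm{\bm{c}_i}^2,\norm{\bm{c}_j}^2$ and $|\delta|$ contributions; your description of the $|\delta_i|\min(|\delta_i|,\lambda_{\min}(D))$ saturation is also the right mechanism. The gap is in the $\gamma_i^2$ term. Expanding the mean perturbation gives
\begin{equation*}
\bm{m}_i-\bm{m} \;=\; \bigl(\gamma_i-\bm{c}_i^{T}B\bm{\mu}\bigr)\,B\bm{e}_i ,
\end{equation*}
so the entire effect of intervening on $V_i$ on the mean lies along the single direction $B\bm{e}_i$ with scalar coefficient $\gamma_i-\bm{c}_i^{T}B\bm{\mu}$. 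No projection --- onto the dual of $B\bm{e}_i$ or anything else --- can ``isolate $\gamma_i$'' from this scalar: if $\gamma_i=\bm{c}_i^{T}B\bm{\mu}$ the mean does not move at all, yet the lemma still promises a strictly positive $h(B,D,\bm{\mu})\gamma_i^2$ contribution. Your proposal as written would therefore fail in exactly this cancellation regime, and this is not a cross term between the $i$- and $j$-interventions, so your Cauchy--Schwarz/AM--GM split for inter-intervention coupling does not rescue it.

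The paper closes this hole with a case analysis you are missing: when $|\bm{c}_i^{T}B\bm{\mu}|$ is far from $|\gamma_i|$ (outside $[|\gamma_i|/2,\,3|\gamma_i|/2]$), the mean difference already yields $(\gamma_i-\bm{c}_i^{T}B\bm{\mu})^2\ge\gamma_i^2/4$; when it is close, Cauchy--Schwarz gives $\gamma_i^2/4\le(\bm{c}_i^{T}B\bm{\mu})^2\le\norm{\bm{c}_i}^2\norm{B\bm{\mu}}^2$, so the $\gamma_i^2$ contribution is recovered not from the mean at all but by converting part of the $f(B,D)\norm{\bm{c}_i}^2$ term in the covariance separation into $\gamma_i^2/(4\norm{B\bm{\mu}}^2)$. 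This cross-compensation between the mean bound and the covariance bound is why $h$ depends on $\bm{\mu}$ and why the two lemmas must be combined jointly rather than summed independently. Adding this case split (and the observational specialization $i=0$ or $j=0$, where only the non-cancelling cases survive) would complete your argument.
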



\begin{remark}
If one of the distributions is observational, i.e., say $i=0$, then the above bound holds with $\norm{\bm{c}_i}^{2}=0$, $\gamma_i=0$, and $|\delta_i|=0$. 
\end{remark}
\begin{remark}
Different types of interventions will allow us to change certain parameters in the above bound. Let the exogenous noise variables $u_i\sim \mathcal{N}(\mu_i,\sigma_i)$ when not intervened. Now, if we intervene on node $v_i$, then the new noise variable has distribution $u_i'\sim \mathcal{N}(\mu_i+\gamma_i,\sigma_i')$ given as follows for different intervention types:
\vspace{-0.2cm}
\begin{enumerate}
    \item[(1)] \emph{do} intervention: $\norm{\bm{c}_i}\neq0$ if $v_i$ is not a root node, $|\delta_i|\neq 0$ if  $\sigma_i\neq 0$, and $\gamma_i \neq 0$ if the value of node $v_i$ is set to any value other than $\mu_i$.
    \vspace{-0.2cm}
    \item[(2)] \emph{\stochastic} intervention: $\norm{\bm{c}_i}\neq0$ if $v_i$ is not a root node.
    \vspace{-0.2cm}
    \item[(3)] \emph{shift} intervention: $\gamma_i\neq 0$.
    \vspace{-0.2cm}
    \item[(4)] \emph{soft} intervention is the most general case, and nothing is guaranteed to be non-zero. 
\end{enumerate}
\end{remark}
\vspace{-0.2cm}

Next, we restate a definition from \citet{Belkin2010PolynomialLO} that defines the radius of identifiability ($\mathcal{R}(\theta)$) of a probability distribution. If $\mathcal{R}(\theta)>0$, this implies that we can uniquely identify the distribution.
\begin{definition}
\label{def:radius_identifiability}
Let $\mathbb{P}_{\theta}, \theta\in \Theta$, be a family of probability distributions. For each $\theta$ we define
the radius of identifiabiity $\mathcal{R}(\theta)$ as the supremum of the following set:
\begin{equation*}
    \{r>0 \>|\> \forall \theta_1\neq \theta_2, (\norm{\theta_1-\theta}<r, \norm{\theta_2-\theta}<r) \implies (\mathbb{P}_{\theta_1}\neq \mathbb{P}_{\theta_2}) \}.
\end{equation*}
In other words, $\mathcal{R}(\theta)$ is the largest number, such that the open ball of radius $\mathcal{R}(\theta)$ around $\theta$ intersected with $\Theta$ is an identifiable (sub) family of the probability distribution. If no such ball exists, $\mathcal{R}(\theta)=0$.
\end{definition}

Next, we restate a result from \cite{Belkin2010PolynomialLO} adapted to our setting, which shows that there exists an efficient algorithm for disentangling a mixture of Gaussians as long as the parameters are \emph{separated}, which will ensure that the radius of identifiability $\mathcal{R}(\theta)>0$.

\begin{theorem}[Theorem 3.1 in \citet{Belkin2010PolynomialLO}] 
\label{theorem:belkin_restatement}
Let $\mathbb{P}_{mix}(\bm{V})$ be a mixture of Gaussians with parameters $\theta = \{({\bm{m}}_{1},{S}_{1},{\pi}_{1}),\ldots, ({\bm{m}}_{|\mathcal{I}|},{S}_{|\mathcal{I}|},{\pi}_{|\mathcal{I}|})\} \in \Theta$ where $\Theta$ is the set of parameters within a ball of radius Q. Then, there exists an algorithm which given $\epsilon>0$ and $0<\delta<1$ and $poly\big{(}n,\operatorname{max}(\frac{1}{\epsilon},\frac{1}{\mathcal{R}(\Theta)}),\frac{1}{\delta},Q\big{)}$ samples from $\mathbb{P}_{mix}(\bm{V})$, with probability greater than $(1-\delta)$, outputs a parameter vector $\hat{\theta}=\hat{\theta} = (\hat{\bm{m}}_{1},\hat{S}_{1},\hat{\pi}_{1}),\ldots, (\hat{\bm{m}}_{|\mathcal{I}|},\hat{S}_{|\mathcal{I}|},\hat{\pi}_{|\mathcal{I}|}) \in \Theta$ such that  
there exists a permutation $\rho:\{1,\ldots,|\mathcal{I}|\} \rightarrow \{1,\ldots,|\mathcal{I}| \}$ satisfying:
\begin{equation*}
    \sum_{i_k \in \mathcal{I}} \Big{(} \norm{\bm{m}_{i_k}-\hat{\bm{m}}_{\rho(i_k)} }^{2} + \norm{S_{i_k}-\hat{S}_{\rho(i_k)} }^{2} + |\pi_{i_k}-\hat{\pi}_{\rho(i_k)}|^{2}  \Big{)} \leq \epsilon^{2},
\end{equation*}
where the radius of identifiability $\mathcal{R}(\theta)$ is lower bounded by: 
\begin{equation*}
    \big{(}\mathcal{R}(\theta)\big{)}^{2} \geq \operatorname{min}\Big{(} \frac{1}{4}\operatornamewithlimits{min}_{i\neq j}(\norm{\bm{m}_i-\bm{m}_j}^{2} + \norm{S_{i}-S_j}^{2}) , \operatornamewithlimits{min}_{i} \pi_{i} \Big{)}.
\end{equation*}
\end{theorem}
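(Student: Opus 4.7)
The plan is to prove the two claims of the theorem separately: first the lower bound on the radius of identifiability $\mathcal{R}(\theta)$, and second the existence of a polynomial-time/polynomial-sample algorithm that recovers $\theta$ to accuracy $\epsilon$ in the weak sense stated.

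For the radius-of-identifiability bound, I would invoke Teicher's classical identifiability of finite Gaussian mixtures: if $\theta_1 \neq \theta_2$ satisfy $\mathbb{P}_{\theta_1} = \mathbb{P}_{\theta_2}$, then the components of $\theta_2$ are obtained from those of $\theta_1$ by a nontrivial permutation $\rho$ (with matching weights). Now assume both $\theta_1, \theta_2$ lie within a ball of radius $r$ around $\theta$. Pick any $i$ with $j := \rho(i) \neq i$. Since component $i$ of $\theta_2$ is literally component $j$ of $\theta_1$, the triangle inequality gives $\|\bm{m}_i - \bm{m}_j\| \leq \|\bm{m}_i - \bm{m}_{2,i}\| + \|\bm{m}_{1,j} - \bm{m}_j\| \leq 2r$, and likewise $\|S_i - S_j\| \leq 2r$ and $|\pi_i - \pi_j| \leq 2r$. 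So a nontrivial permutation forces $\tfrac{1}{4}\min_{i\neq j}\!\big(\|\bm{m}_i-\bm{m}_j\|^2 + \|S_i-S_j\|^2\big) \leq r^2$. A separate sub-case (needed to rule out a component collapsing to weight $0$ under a nearby parameter) uses $|\pi_{1,i} - \pi_i| \leq r$ to conclude that $r^2 < \min_i \pi_i$ prevents such degeneracy. Taking the contrapositive yields the stated lower bound.

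For the algorithmic part I would follow a method-of-moments strategy. First, show that after fixing an order $N = N(|\mathcal{I}|)$, the $N$-th order moments of the mixture form a polynomial map $\Phi:\Theta \to \mathbb{R}^{d(n,N)}$ of the parameters whose components are explicit polynomials in $(\bm{m}_k, S_k, \pi_k)$; by Teicher identifiability plus a compactness argument on the ball of radius $Q$, $\Phi$ becomes injective (up to permutation) on any sub-ball where the components are separated. Next, establish a quantitative local inversion of $\Phi$: on the ball $B(\theta, \mathcal{R}(\theta)/2)$, any $\theta'$ whose moments are within $\eta$ of $\Phi(\theta)$ must satisfy $\|\theta - \theta'\| \leq C\, (\eta/\mathcal{R}(\theta))^{\alpha}$ for constants $C, \alpha$ polynomial in $n, Q$; this follows from a Łojasiewicz-type inequality applied to the semialgebraic set of near-solutions, with the exponent $\alpha$ and constant $C$ controlled by the degrees and coefficient magnitudes of the moment polynomials. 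Third, apply sub-Gaussian concentration: since parameters lie in a ball of radius $Q$, each moment is a bounded-degree polynomial of sub-Gaussian random variables, and standard tail bounds show that $m = \text{poly}(n, 1/\eta, 1/\delta, Q)$ i.i.d. samples suffice to estimate all moments up to order $N$ to accuracy $\eta$ with probability $1-\delta$. Finally, the algorithm estimates the empirical moments, solves (approximately) the resulting polynomial system by a brute-force grid search or resultant-based procedure over the ball of radius $Q$, and outputs the best match; chaining the sample bound with the local-inversion bound and choosing $\eta$ so that $C(\eta/\mathcal{R}(\theta))^{\alpha} = \epsilon$ yields sample complexity $\text{poly}(n, \max(1/\epsilon, 1/\mathcal{R}(\theta)), 1/\delta, Q)$.

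The main obstacle is the quantitative local inversion of the moment map $\Phi$ with an explicit dependence on $\mathcal{R}(\theta)$: one must show that, on the ball of radius $\mathcal{R}(\theta)/2$, small perturbations of the moment vector translate into small perturbations of the parameters, with a modulus controlled by $\mathcal{R}(\theta)$. This is subtle because at the boundary of identifiability the map can degenerate (e.g.\ two components merging), so the Łojasiewicz exponent is not $1$ in general, and a careful algebraic-geometric argument is needed to produce an explicit polynomial dependence rather than an abstract existence statement. The other routine-but-careful piece is keeping track of the polynomial dependence on $Q$ through the concentration bounds, since higher moments of Gaussians with large parameters blow up polynomially in $Q$.
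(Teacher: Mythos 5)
You should first note that the paper itself does not prove this statement: it is imported verbatim (Theorem 3.1 of \citet{Belkin2010PolynomialLO}, lightly adapted in notation) and used as a black box in the proof of the paper's main identifiability theorem. There is therefore no in-paper argument to compare yours against; what you have written is an attempt to reprove the external result from scratch.

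On its own merits, the first half of your argument (the lower bound on $\mathcal{R}(\theta)$) is essentially right and matches how the radius bound is obtained for Gaussian mixtures: Teicher-type identifiability forces any two indistinguishable parameter points in a small ball to differ by a nontrivial permutation of components or by a collapsing weight, and the triangle inequality --- taken in the combined Euclidean norm on $(\bm{m},S)$, which is what yields the factor $\tfrac{1}{4}$ rather than $\tfrac{1}{8}$ --- then bounds $r$. The genuine gap is in the second half. You correctly identify the method-of-moments skeleton, but the step you defer as ``the main obstacle'' --- a quantitative inversion of the moment map whose modulus of continuity is \emph{polynomial} in $n$, $Q$, and $1/\mathcal{R}(\theta)$ --- is not a technicality to be postponed; it is the entire content of the Belkin--Sinha paper. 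Their route is not a Łojasiewicz inequality with an unspecified exponent but a quantifier-elimination (Tarski--Seidenberg) argument over the semialgebraic set of parameter pairs with matching moments, combined with compactness of the radius-$Q$ ball, which is what produces explicit polynomial bounds. Without executing that step (or an equivalent one), your sketch establishes identifiability but not the claimed polynomial sample complexity, so as a self-contained proof it is incomplete; for the purposes of this paper the correct move is the one the authors make, namely to cite the result.
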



Our \reftheorem{theorem:main_identifiability} along with \refassm{assm:effectiveness} states that for every pair $i,j\in ([n]\cup\{0\})^{\otimes 2}$ we have  $\norm{S_i-S_j}^{2}_{F} + \norm{\bm{m}_i-\bm{m}_j}_{F}^{2}>0$. Also, by construction of the mixture of interventions (in \refdef{def:mixture_of_intervention}), we have $\pi_i>0,  \forall i\in[n]\cup \{0\}$. This implies that the radius of convergence $\mathcal{R}(\theta)>0$ and thus the parameters of the mixture of interventions $\mathbb{P}(\bm{V})$ can be identified uniquely given samples from the mixture distribution with sample size inversely proportional to $\mathcal{R}(\theta)$.  

\section{Empirical Results}
\label{sec:empirical_results}

\subsection{Experiment on Simulated Datasets}
\label{sec:empirical_results_empirical}
\refprop{prop:mixute_mec} establishes that given samples from the mixture distribution, one can identify the underlying causal graph up to its I-MEC. To learn the causal graph, we first disentangle the mixture. \reftheorem{theorem:main_identifiability} and \ref{theorem:belkin_restatement} show that the sample complexity of our mixture disentangling algorithm is inversely proportional to various parameters of the underlying system and the intervention parameters ($\gamma, |\delta|$ and $\norm{\bm{c}}$). Our simulation study further validates our theoretical results and characterizes the end-to-end performance of identifying the causal graph with such mixture data and its dependence on the above-mentioned parameters.

\paragraph{Simulation Setup}
We consider data generated from a Linear-SEM with additive Gaussian noise, $\bm{x} = (I-A)^{-1}\bm{\eta}$ (see \refsec{sec:prelim}),
with n endogenous variables and corresponding exogenous (noise) variables. Here $\bm{\eta}\sim \mathcal{N}(0,D)$, where the noise covariance matrix is diagonal with entries $D=diag(\sigma^{2},\ldots,\sigma^{2})$ and $\sigma=1$ unless otherwise specified. $A$ is the (lower-triangular) weighted adjacency matrix whose weights are sampled in the range $[-1,-0.5]\cup[0.5,1]$ bounded away from 0. Let $\mathcal{G}^{*}$ denote the causal graph corresponding to this linear SEM with edge $i\rightarrow j$ $\Leftrightarrow$ $A_{ji}>0$. By sampling from the resulting multivariate Gaussian distribution, we obtain \emph{observational} data. 
Next, for each causal graph, we generate separate \emph{interventional} data by intervening on a given set of nodes in the graph one at a time (atomic interventions), which is again a Gaussian distribution but with different parameters (see Atomic Intervention paragraph in \refsec{sec:prelim}).
We experiment with two settings:
\begin{enumerate}
\vspace{-0.3cm}
    \item[(1)] \textit{all}: where we perform an atomic intervention on all nodes in the graph.
    \vspace{-0.1cm}
    \item[(2)] \textit{half}: where we perform an atomic intervention on a 
    randomly selected half of the nodes.
\end{enumerate}
\vspace{-0.3cm}
Then, the \mixed data is generated by pooling all the individual atomic interventions and observational data with equal proportions into a single dataset. The decision to use equal proportions of samples from all components is solely intended to simplify the design choices for the experiment setup.
In our experiment, we vary the total number of samples in the \mixed dataset as $N\in \{2^{10},2^{11},\ldots, 2^{17}\}$. 
In particular, we perform two kinds of atomic interventions: ``\emph{do}" and ``{\stochastic}" (see Interventions paragraph in \refsec{sec:prelim} for a formal definition). 
The initial noise distribution for all the nodes is univariate Gaussian distribution $\mathcal{N}(0,1)$. 
In our experiments for \emph{do} interventions, instead of setting the final variance of noise distribution to $0$, we set it to a very small value of $10^{-9}$ for numerical stability. 
Unless otherwise specified, we perform 10 runs for each experimental setting and plot the $0.05$ and $0.95$ quantiles. See \ref{app:add_emp_res} for additional details on the experimental setup.

\begin{algorithm}[t]
\caption{Mixture-UTIGSP}\label{algo:mixture_utigsp}
\SetAlgoLined
\DontPrintSemicolon
\SetKwInOut{Input}{input}
\SetKwInOut{Output}{output}
\SetKw{Return}{return}
\Input{\mixed dataset $(\mathcal{D_{\text{mix}}})$, observational data $(\mathcal{D}_{\text{obs}})$, number of nodes ($n$), cutoff ratio ($\tau$)}
\Output{Mixture Distribution Parameters ($\theta$), Intervention Targets $(\mathcal{I})$, causal graph $(\mathcal{\hat{G}})$}
\begin{enumerate}
    \item \new{Esstimate $\theta_k \triangleq \big{\{}(\bm{\hat{\mu}}_1,\hat{S}_1),\ldots (\bm{\hat{\mu}}_k,\hat{S}_{k})\big{\}}$ = \emph{GaussianMixtureModel$\big{(}\mathcal{D}_{\text{mix}}$,$k\big{)}$} for each possible number of component in the mixture i.e $ k\in [n+1]$. Define $\Theta \triangleq \{\theta_1,\ldots,\theta_{n+1}\}$ be the set of estimated parameters and $\mathcal{L}=\{l_1,\ldots,l_{n+1}\}$ be the log-likelihood of the mixture data corresponding to the models with a different number of components.}
    \item \new{To estimate the number of components in the mixture ($k_*$) iterate over $k=(n+1)$ to $2$}:
    \begin{enumerate}
        \item \new{stop where the relative change in the likelihood increases above a cutoff ratio i.e $\frac{|l_k-l_{k-1}|}{l_k}>\tau$
        \item $k_*=k$ if the stopping criteria is met otherwise $k_*=1$.}
    \end{enumerate}
    \item  $\mathcal{I}$, $\mathcal{\hat{G}}$ = UT-IGSP$\big{(}\mathcal{D}_{\text{obs}},\theta_{k_*} \big{)}$ \cite{Squires2020UTGSP} 
\end{enumerate}

\Return{$\Theta$, $\mathcal{I}$, $\mathcal{\hat{G}}$}
\end{algorithm}

\begin{figure}[t]
\centering
    \begin{subfigure}[h]{.32\textwidth}
        \captionsetup{justification=centering}
        \centering
            \includegraphics[width=\linewidth,height=0.9\linewidth]{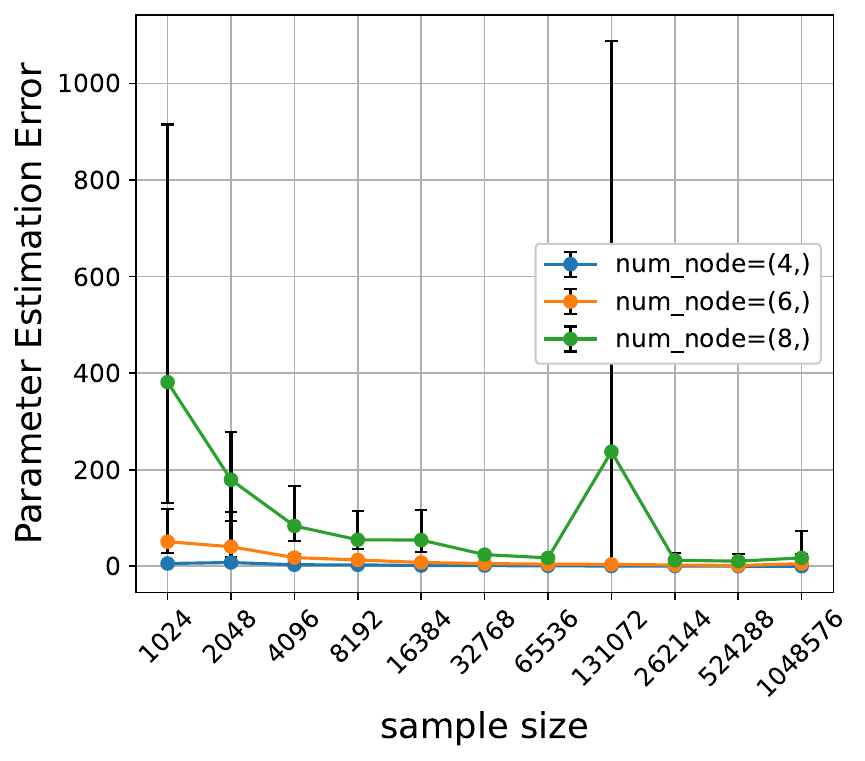}
    		\caption[]%
            {{\small \new{num intervention = all} ($\downarrow$)}}    
    		\label{fig:node_var_param_full}
    \end{subfigure}
\hfill 
  \begin{subfigure}[h]{.32\textwidth}
  \captionsetup{justification=centering}
    \centering
    \includegraphics[width=\linewidth,height=0.9\linewidth]{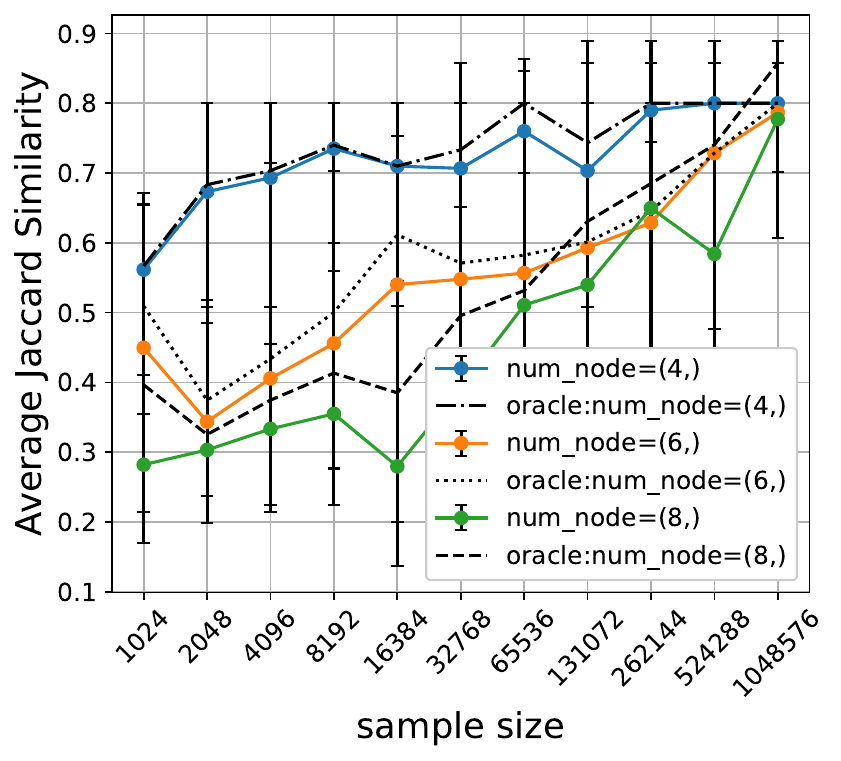}
		\caption[]%
        {{\small num intervention = all ($\uparrow$)}}    
		\label{fig:node_var_js_full} 
  \end{subfigure}
\hfill
  \begin{subfigure}[h]{.32\textwidth}
    \captionsetup{justification=centering}
    \centering
    \includegraphics[width=\linewidth,height=0.9\linewidth]{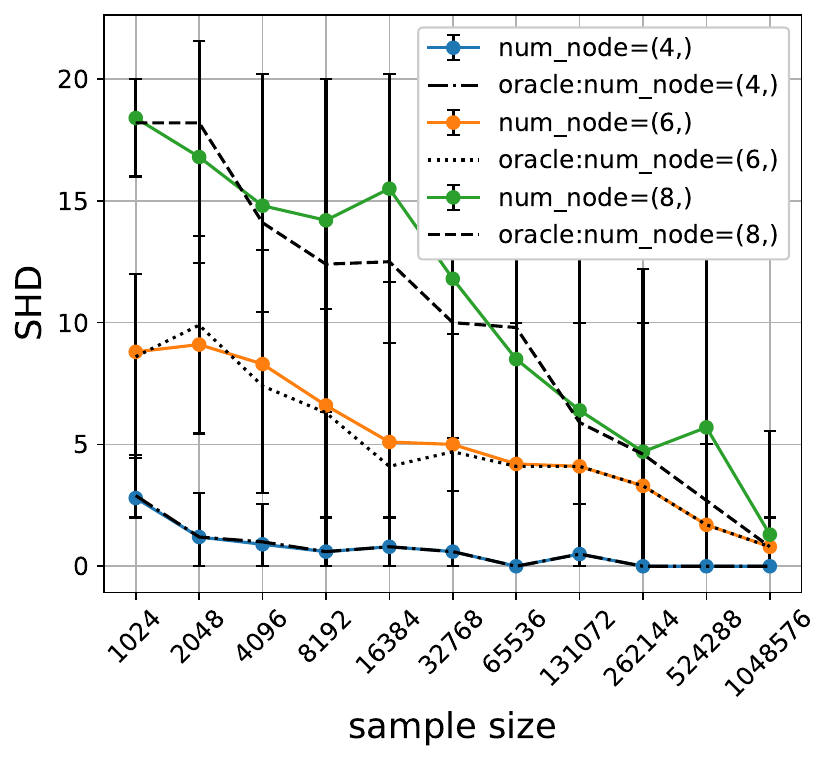}
		\caption[]%
        {{\small num intervention = all ($\downarrow$)}}    
		\label{fig:node_var_shd_full} 
  \end{subfigure}

\medskip

\begin{subfigure}[h]{.32\textwidth}
        \captionsetup{justification=centering}
        \centering
            \includegraphics[width=\linewidth,height=0.9\linewidth]{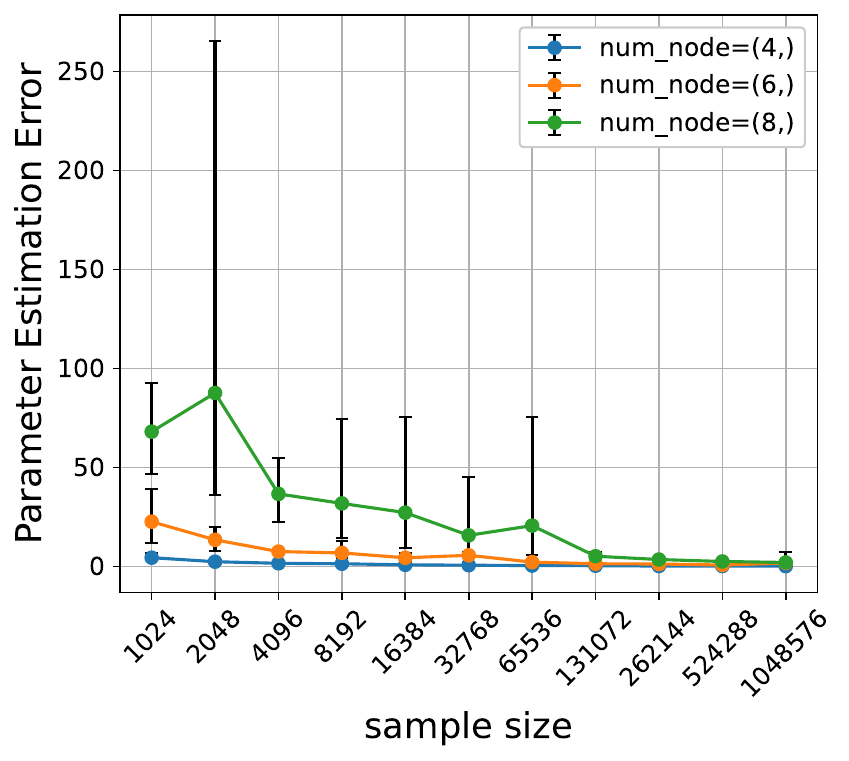}
    		\caption[]%
            {{\small \new{num intervention = half} ($\downarrow$)}}    
    		\label{fig:node_var_param_half}
    \end{subfigure}
\hfill 
  \begin{subfigure}[h]{.32\textwidth}
  \captionsetup{justification=centering}
    \centering
    \includegraphics[width=\linewidth,height=0.9\linewidth]{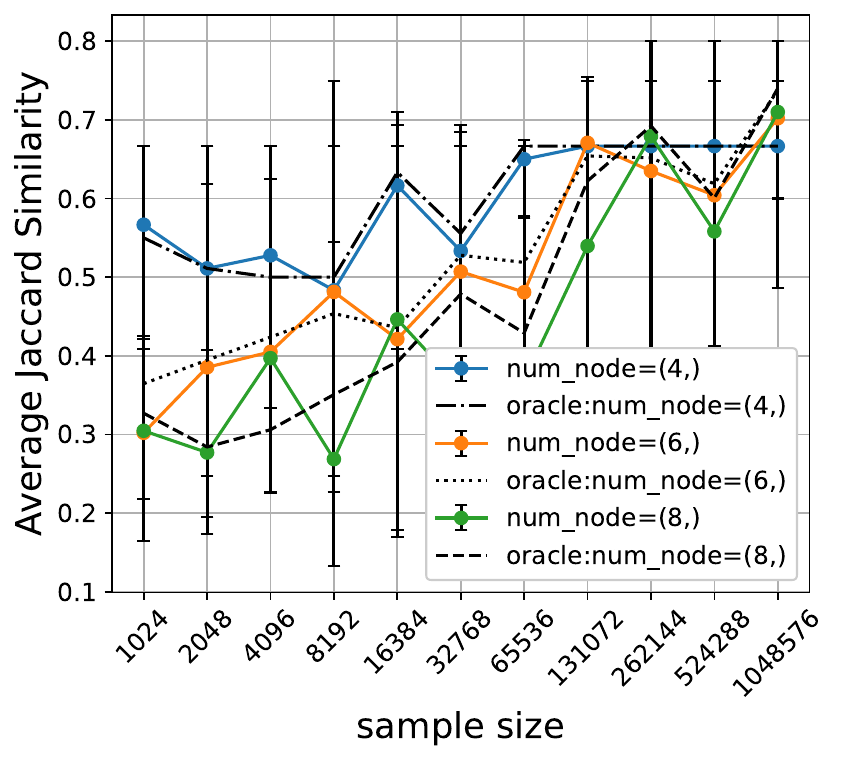}
		\caption[]%
        {{\small num intervention = half ($\uparrow$)}}    
		\label{fig:node_var_js_half} 
  \end{subfigure}
\hfill
  \begin{subfigure}[h]{.32\textwidth}
    \captionsetup{justification=centering}
    \centering
    \includegraphics[width=\linewidth,height=0.9\linewidth]{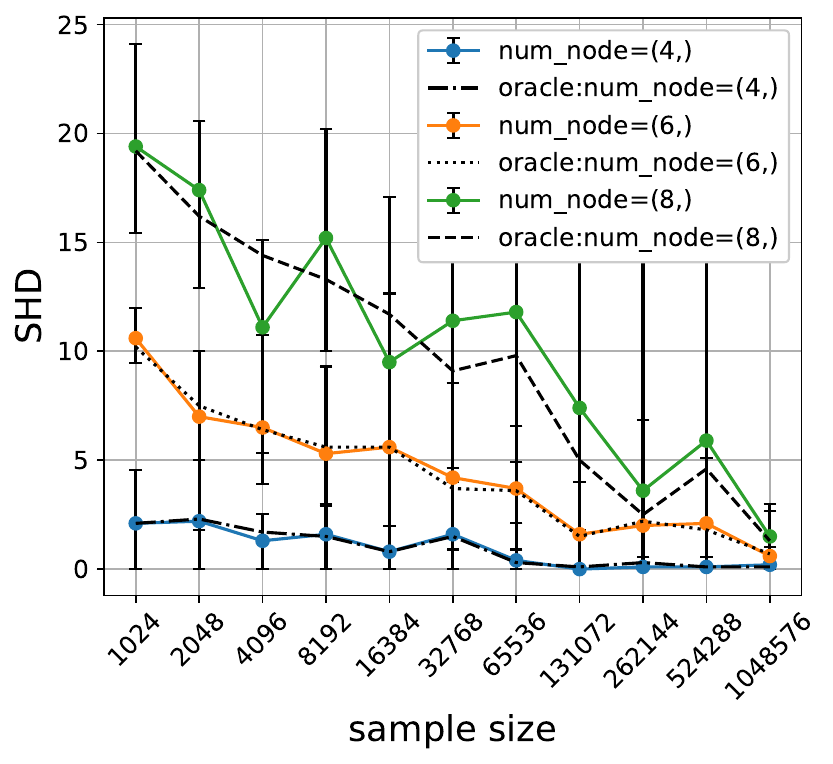}
		\caption[]%
        {{\small num intervention = half ($\downarrow$)}}    
		\label{fig:node_var_shd_half} 
  \end{subfigure}

\caption{\textbf{Performance of \refalgo{algo:mixture_utigsp} as we vary sample size and number of nodes}: The first row (a-c) shows the performance when the \mixed data contains atomic intervention on all the nodes and observational data. The second row (d-f) shows the performance when the number of atomic interventions (chosen randomly) in \mixed data is taken to be half of the number of nodes along with observational data. The column shows different evaluation metrics, i.e., Parameter Estimation Error, Average Jaccard Similarity, and SHD. The symbols ($\uparrow$) represent higher is better, and $(\uparrow)$ represents the opposite (see Evaluation metric paragraph in \refsec{sec:empirical_results}).
In summary, performance improves for both cases as the number of samples increases. However, the graph with more nodes requires a larger sample to perform similarly.
For a detailed discussion, see \refsec{sec:empirical_results_empirical}. 
}
\label{fig:node_var_all3}
\end{figure}

\paragraph{Method Description and Evaluation Metrics}
Given the \mixed data generated from the underlying true causal graph $\mathcal{G}^{*}$, the goal is to estimate the underlying causal graph $\mathcal{\hat{G}}$. We break down the task into two steps. First, we disentangle the mixture data and identify the parameters of the individual interventional and/or observational distributions. Our theoretical result (\reftheorem{theorem:main_identifiability}) uses \cite{Belkin2010PolynomialLO} for identifiability of the parameters of a mixture of Gaussians. Since they only show the existence of such an algorithm, 
    we use the standard sklearn python package \cite{scikit-learn} that implements an EM algorithm to estimate the parameters of the mixture.
    Importantly, our experiment doesn't require prior knowledge about the number of components (k) in the mixture. 
    \new{We train separate mixture models varying the number of components. Then we select the optimal number of components using the log-likelihood curve of the fitted Gaussian mixture model using a simple thresholding heuristic (see step 2 in \refalgo{algo:mixture_utigsp}). We leave the exploration of better model selection criteria for future work. For all our simulation experiments, unless otherwise specified, we use a cutoff threshold of 0.07, chosen arbitrarily. In \refappendix{app:add_emp_res}, we experiment with different values of this threshold and show that \ouralgo is robust to this choice.} 
    The intervention targets present in the mixture are still unknown at this step. Next, we provide the estimated mixture parameters to an existing causal discovery algorithm with unknown intervention targets (UT-IGSP \citep{Squires2020UTGSP}), which internally estimates the unknown intervention targets and outputs one of the possible graphs from the estimated I-MEC. We assume that observational data is given as an input to the UT-IGSP algorithm. 
    The proposed algorithm is provided in \refalgo{algo:mixture_utigsp}. See \ref{app:add_emp_res} for our hyperparameter choice and other experimental details.


\paragraph{Evaluation Metrics}
We evaluate the performance of \ouralgo (\refalgo{algo:mixture_utigsp}) on three metrics:

\textit{Parameter Estimation Error:} This metric measures the least absolute error between the estimated parameters (mean and covariance matrix defining each individual distribution) after the first step of \ouralgo matched with the ground truth parameters considering all possible matchings between the components averaged over all runs. See \refappendix{app:subsec:eval_metric} for details.

\textit{Average Jaccard Similarity (JS):} We measure the average Jaccard similarity between the estimated intervention target and the corresponding ground truth (atomic) intervention target. We use the matching between the estimated and ground truth components found while calculating the parameter estimation error averaged over all runs. See \refappendix{app:subsec:eval_metric} for details.

\textit{Average Structural Hamming Distance (SHD):} Given the estimated and ground truth graphs, we compute the SHD between the two graphs averaged over all runs. 



\paragraph{Results}
\reffig{fig:node_var_all3} shows the performance of \refalgo{algo:mixture_utigsp} in the two settings, \emph{``all"} in the first row and \emph{``half"} in the second row (see Simulation setup above). 
The first column shows the performance of the first step of \refalgo{algo:mixture_utigsp} where the mixture parameters are identified. We observe that parameter estimation error decreases as the number of samples increases in both settings. As expected, larger graphs require a larger sample size to perform similarly to smaller-sized graphs within each setting.

Step 1 of our \refalgo{algo:mixture_utigsp} only recovers the parameters ($\{(\bm{m}_i,S_i)\}_{i=1}^{k}$) of the components present in the mixture distribution. In step 2 of our \refalgo{algo:mixture_utigsp}, we call UT-IGSP \cite{Squires2020UTGSP} that identifies the individual intervention targets from the estimated distribution parameters and also returns a causal graph from the estimated I-MEC. \reffig{fig:node_var_js_full} and \ref{fig:node_var_js_half} show the average Jaccard Similarity between the ground truth and the estimated intervention targets. The colored lines denote experiments on graphs with different numbers of nodes. The corresponding dotted lines show the oracle performance of UT-IGSP when the separated ground truth mixture distributions were given as input. As expected, in both the settings (\reffig{fig:node_var_js_full} and \ref{fig:node_var_js_half}), the oracle version performs much better compared to its non-oracle counterpart for small sample sizes ($2^{10}$ to $2^{14}$) but performs similarly as sample size increases. 

Finally, in the third column (\reffig{fig:node_var_shd_full} and \ref{fig:node_var_shd_half}), we calculate the SHD between the estimated causal graph $\mathcal{\hat{G}}$ and the ground truth causal graph $\mathcal{G}^{*}$. 
The SHD of the graph estimated by \ouralgo and the oracle version are similar for different node settings and all sample sizes. This suggests that small errors in the estimation of the parameters of the mixture distribution don't affect the estimation of the underlying causal graph.

\textbf{Additional Experiments:} 
In \refappendix{app:add_emp_res}, we provide details on the experimental setup and additional results. 
In \reffig{fig:main_plot_other_metric}, we plot two additional metrics for the simulation experiments. The first metric is the number of estimated components in the mixture and the second metric is the the error in estimation of the mixing coefficient.
In \reffig{fig:cutoff_ratio}, we study the sensitivity of the cutoff ratio used by \ouralgo to select the number of components in the mixture.
Next, in \reffig{fig:variation_sparsity}, we evaluate the performance of our \refalgo{algo:mixture_utigsp} as we vary the density, i.e., the expected number of edges in the graph. 
In \reffig{fig:variation_control_params}, we show how the sparsity of the graph and other intervention parameters like the value of the new mean and variance of the noise distribution after intervention affects the performance of \refalgo{algo:mixture_utigsp}.

\begin{table}[t]
\centering
\small
\caption{\textbf{Performance of \refalgo{algo:mixture_utigsp} on Protein Signalling Dataset \cite{sachs_protein_signalling_dataset}}: We evaluate the performance of \ouralgo as we vary the cutoff ratio to select the number of component in the mixture. The second column shows the number of estimated components where the actual number of components in the mixture is 6. The third and fourth columns show the Jaccard Similarity of the identified intervention target of \ouralgo and oracle versions of the UT-IGSP algorithm. The fourth and last column shows the SHD between the estimated and true causal graphs for both methods respectively. Overall we observe that at a lower cutoff threshold \ouralgo is able to perform as well as the oracle UT-IGSP algorithm on all the metrics. See \refsec{app:subsec:expt_setup_bio} for detailed discussion.}
\label{tbl:sachs_combined}
\begin{tabularx}{\textwidth}{c *{6}{Y}}
\toprule
 \textbf{cutoff ratio} & \textbf{Estimated/True Component} & \textbf{JS} & \textbf{Oracle JS} & \textbf{SHD} & \textbf{Oracle SHD} \\
\midrule
0.01 & 4/6 & 0.07& 0.05 &15 &19\\
0.07 & 2/6 & 0.04& 0.05&18 &17\\
0.15 & 1/6 & 0.00& 0.05&16 &18\\
0.30 & 1/6 & 0.00& 0.05&18 &7\\
\midrule
avg & 2/6 & 0.03 &0.05 &16.75& 17.75\\
\bottomrule

\end{tabularx}
\end{table}

\subsection{\new{Experiment on Biological Dataset}}
\label{sec:empirical_results_sachs}
We evaluate our method on the Protein Signaling dataset \cite{sachs_protein_signalling_dataset} to demonstrate real-world applicability. The dataset is collected from flow cytometry measurement of 11 phosphorylated proteins and phospholipids and is widely used in causal discovery literature \cite{igsp2017,Squires2020UTGSP}. The dataset consists of 5846 measurements with different experimental conditions and perturbations. 
Following \citet{igsp2017}, we define the subset of the dataset as observational, where only the receptor enzymes were perturbed in the experiment. Next, we select other 5 subsets of the dataset where a signaling protein is also perturbed in addition to the receptor enzyme. The observational dataset consists of 1755 samples, and the 5 interventional datasets have 911, 723, 810, 799, and 848 samples, respectively. The mixed dataset is created by merging all the observational and interventional datasets. 

The total number of nodes in the underlying causal graph is 11. Thus, the maximum number of possible components in the mixture is 12 (11 single-node interventional distribution and one observational). In the mixture dataset described above, we have 6 components (1 observational and 5 interventional). The second column in \reftbl{tbl:sachs_combined} shows that \ouralgo
recovers 4 components close to the ground truth 6 when the cutoff ratio is 0.01 (step 2 of \refalgo{algo:mixture_utigsp}). 
Next, we give the disentangled dataset from the first step of our algorithm to identify the unknown target. Though the Jaccard similarity of the recovered target is not very high (average of 0.03 shown in the last row of the third column, where the maximum value is 1.0), it is similar to that of the oracle performance of UT-IGSP when the disentangled ground truth mixture distributions were given as input. This shows that it is difficult to identify the correct intervention targets even with correctly disentangled data. 
Also, the SHD between the recovered graph and the widely accepted ground truth graph for Mixture-UTIGSP (ours) and UT-IGSP (oracle) is very close. Overall, at a lower cutoff ratio, the performance of \ouralgo is close to the Oracle UT-IGSP algorithm. Unlike the simulation case (see \reffig{fig:cutoff_ratio}), \ouralgo's performance is sensitive to the choice of the cutoff ratio on this dataset. \new{In \reffig{fig:sachs_graph_viz}, we plot the ground truth graph curated by the domain expert alongside the estimated causal graph for visualization.}

\section{Conclusion}
\label{sec:conclusion}
We studied the problem of learning the mixture distribution generated from observational  and/or multiple unknown interventional distributions generated from the underlying causal graph.
We show that the parameters of the mixture distribution can be uniquely identified under the mild assumption that ensures that any intervention changes the distribution of the observed variables.
As a consequence of our identifiability result, under an interventional faithfulness assumption (\citet{Squires2020UTGSP}),  we show that the underlying true causal graph can be identified up to its I-MEC based on a mixture of unknown interventions, thereby obtaining the same identifiability results as in the unmixed setting.
Finally, we conduct a simulation study to validate our findings empirically. We demonstrate that as the sample size increases, we obtain parameter estimates of the mixture distribution that are closer to the ground truth and, as a result, we eventually recover the correct underlying causal graph.

\section{Limitations and Future Work}
\label{sec:limitation}
Since our work is the first to study the problem of a mixture of causal interventions without assuming knowledge of causal graphs, we have restricted our attention to one particular family of causal models---Linear-SEM with additive Gaussian noise. In the future, it would be interesting to study this problem for a more general family of causal models. Further, our work uses \citet{Belkin2010PolynomialLO} for identifying the parameters of a mixture of Gaussians, which assumes that the number of components in the mixture is fixed. Recent progress in \cite{RongGe2015Mixture} gives an efficient algorithm for recovering the parameters when the number of components is almost $\sqrt{n}$, where $n$ is the number of variables. However, they only work in perturbative settings, and proving the result for non-perturbative settings is out of the scope of the current paper. Finally, to identify the parameters of the mixture distribution in our empirical study, we use heuristics to estimate the number of components. It would be interesting to explore other methods to automatically select the number of components.


\bibliography{main}
\bibliographystyle{plainnat}

\newpage
\appendix
\section{Missing Proofs}
\label{app:missing_proofs}

\subsection{I-faithfulness in UT-IGSP algorithm.}
\label{app:strong_faithfullness_assm} 
Here, we restate the assumption from \citet{Squires2020UTGSP} that is needed for consistency of the UT-IGSP algorithm:
\begin{assumption}[$\mathcal{I}$-faithfulness assumption]
Let $\mathcal{I}$ be a list of intervention targets. The set of distributions $\{f^{obs}\}\cup \{f^{I}\}_{I_{\mathcal{I}}}$ is $\mathcal{I}$-faithful with respect to a DAG $\mathcal{G}$ if $f^{obs}$ is faithful with respect to $\mathcal{G}$ and for any $I^{k}\in \mathcal{I}$ and disjoint $A,C \subseteq [p]$, we have that $(A\ind \zeta_k|C\cup \zeta_{\mathcal{I}\setminus\{k\}})_{\mathcal{G}^{\mathcal{I}}}$ if and only if $f^{k}(x_{A}|x_{C}) = f^{obs}(x_A|x_C)$.
\end{assumption}

\subsection{Proof of \reflemma{lemma:param_separation}}
\label{app:param_sep_proof}

\paramsep*

\begin{proof}
First, we state a lemma that will give us a lower bound on the separation between the covariance matrix of two intervention distributions (or one could be observational). The proof is given in \refappendix{app:cov_sep_proof}.

\begin{lemma}[Minimum Covariance Separation]
\label{lemma:cov_sep} Let $S_{i}$ and $S_{j}$ be the covariance matrix of the Gaussian distribution corresponding to intervention on node $V_i$ and $V_j$ in the causal graph. 
Let, without loss of generality, $V_i$ be topologically greater than $V_j$ in the causal graph.
Let $B=(I-A)^{-1}$, $D=diag(\sigma_{1},\ldots,\sigma_{n})$ be the covariance matrix of the noise distribution, let $S=BDB^{T}$ be the covariance matrix of the observed distribution $P(\bm{V})$, and let $\bm{c}_{i}=\bm{a}_{i}-\bm{a}_{i}'$ be the soft intervention performed on node $V_i$ (see \refsec{sec:prelim} for definitions). Then we have:
\begin{equation*}
\begin{aligned}
    \norm{S_{i} - S_{j}}_{F}^{2} \geq&\> f(B,D) \Big{(}\norm{\bm{c}_{i}}^{2}+\norm{\bm{c}_{j}}^{2}\Big{)}\\
    &+g(B)\Big{(}|\delta_{i}|\operatorname{min}\big{(}|\delta_i|,\lambda_{min}(D)\big{)}+|\delta_j|\operatorname{min}\big{(}|\delta_j|,\lambda_{min}(D)\big{)}\Big{)}.
\end{aligned}
\end{equation*}
If one of the covariance matrices is from the observational distribution, i.e., say $S_j=S$ then the above lower bounds still holds with $\norm{\bm{c}_j}=0$ and $\delta_j=0$.
\end{lemma}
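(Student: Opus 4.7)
My plan is to write $\Delta_k := S_k - S$ explicitly as a low-rank perturbation of $S$ via Sherman--Morrison, and then to exploit the topological ordering $V_i > V_j$ to decouple the four sources of change $\bm{c}_i, \bm{c}_j, \delta_i, \delta_j$ into distinct blocks of $S_i - S_j = \Delta_i - \Delta_j$.

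First, since $\bm{c}_i$ is supported on indices strictly less than $i$ (parents of $V_i$ come earlier in the topological order) while $B\bm{e}_i$, the $i$-th column of the lower-triangular matrix $B = (I-A)^{-1}$, is supported on indices $\geq i$, the Sherman--Morrison scalar denominator $1 + \bm{c}_i^T B \bm{e}_i$ equals $1$. This gives the clean identity $B_i = B - u_i w_i^T$ with $u_i := B\bm{e}_i$ and $w_i := B^T \bm{c}_i$, and crucially $\bm{e}_i^T w_i = u_i^T \bm{c}_i = 0$. Substituting into $S_i = B_i(D - \delta_i \bm{e}_i \bm{e}_i^T) B_i^T$ kills every cross term involving $\bm{e}_i^T w_i$ and leaves
\begin{equation*}
\Delta_i \;=\; -\bm{p}_i u_i^T \;-\; u_i \bm{p}_i^T \;+\; \bigl(w_i^T D w_i - \delta_i\bigr)\, u_i u_i^T,\qquad \bm{p}_i := S\bm{c}_i,
\end{equation*}
and symmetrically for $\Delta_j$.

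Next, I will use the topological asymmetry. Because $u_j$ has support $\{k : k \geq j\}$ while $u_i$ has support $\{k : k \geq i\} \subsetneq \{k : k \geq j\}$, the $j$-th row/column of $\Delta_j$ captures the full $\delta_j$-contribution whereas the corresponding part of $\Delta_i$ enters only through the $\bm{p}_i$ vector (an $S$-rotation of a vector supported on $\{k : k < i\}$). This lets me restrict the Frobenius norm sum to two complementary index blocks and separate the $\delta_j$ from the $\delta_i$ contributions; analogous restrictions, using that $\bm{p}_k$ and $\bm{p}_l$ live in different supports modulo the action of $S$, isolate the $\bm{c}_i$ and $\bm{c}_j$ contributions from the $\bm{p}_k u_k^T$ terms.

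On each block I would pull out the norm using $\norm{u_k}^2 \geq 1$ (the diagonal of $B$ is $1$) and $\norm{S\bm{c}_k}^2 \geq \lambda_{\min}(S)^2 \norm{\bm{c}_k}^2$, then sum and absorb the $B, D$ dependence into the polynomial coefficients $f$ and $g$. The main obstacle is the coefficient $w_k^T D w_k - \delta_k$ of the $u_k u_k^T$ term: since $w_k^T D w_k \geq 0$, a large positive $\delta_k$ can be partially cancelled, and only a residual of size $\min(|\delta_k|, \lambda_{\min}(D))$ is guaranteed. Handling this carefully — splitting into the regimes $|\delta_k| \leq \lambda_{\min}(D)$ and $|\delta_k| > \lambda_{\min}(D)$ and using a Cauchy--Schwarz-type comparison in the latter — is where I expect the main technical work to lie, and it is precisely what forces the $\min$ factor in the statement. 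The remaining step is bookkeeping: showing that the block restrictions I chose pick up every term with the correct sign, so that summing over blocks gives the clean additive lower bound claimed in the lemma.
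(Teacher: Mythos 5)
Your opening moves match the paper's: the Sherman--Morrison denominator $1+\bm{c}_i^{T}B\bm{e}_i=1$, the identity $B_i=B-B\bm{e}_i\bm{c}_i^{T}B$, and the resulting expression $\Delta_i = -\bm{p}_i\bm{u}_i^{T}-\bm{u}_i\bm{p}_i^{T}+(\bm{w}_i^{T}D\bm{w}_i-\delta_i)\bm{u}_i\bm{u}_i^{T}$ are all exactly what the paper derives, and your diagnosis of where the $\min(|\delta_k|,\lambda_{\min}(D))$ factor comes from (partial cancellation between $\bm{w}_k^{T}D\bm{w}_k$ and $\delta_k$, resolved by a case split) is the content of the paper's auxiliary Lemma~\ref{lemma:cov_diff_sum_lb}.

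The gap is in your central ``separation'' step. You propose to isolate the $\bm{c}_i,\bm{c}_j,\delta_i,\delta_j$ contributions by restricting the Frobenius norm of $\Delta_i-\Delta_j$ to complementary index blocks, but the vectors $\bm{p}_k=S\bm{c}_k$ have full support in general ($S=BDB^{T}$ is dense), so the rank-one terms of $\Delta_i$ and $\Delta_j$ overlap on every block and can cancel there; e.g.\ the $(i,i)$ entry of $\Delta_i$ is $-2[S\bm{c}_i]_i+\eta_i-\delta_i$, not the clean $\eta_i-\delta_i$, and the $j$-th row of $\Delta_i$ is $-[S\bm{c}_i]_j\bm{u}_i^{T}\neq 0$. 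Your phrase ``live in different supports modulo the action of $S$'' is exactly where the difficulty is hiding, and ``pulling out the norm'' via $\|\bm{u}_k\|\geq 1$ does not give a lower bound on a sum of overlapping, possibly cancelling rank-one terms (if $\bm{p}_i$ were proportional to $\bm{u}_i$, the whole of $\Delta_i$ could nearly collapse onto one diagonal entry). The missing idea, which is how the paper proceeds, is to conjugate first: analyze $B^{-1}(S_i-S_j)B^{-T}$ rather than $S_i-S_j$. After conjugation each $B^{-1}\Delta_k B^{-T}=\eta_k\bm{e}_k\bm{e}_k^{T}-DB^{T}\bm{c}_k\bm{e}_k^{T}-\bm{e}_k\bm{c}_k^{T}BD-\delta_k\bm{e}_k\bm{e}_k^{T}$ is supported only on row $k$ and column $k$, the $(i,\cdot)$ and $(j,\cdot)$ rows genuinely decouple (using $\bm{c}_j^{T}B\bm{e}_i=0$ from the topological order), and one pays only a benign $\|B^{-1}\|_F^{4}$ factor when converting back. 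Without this change of basis, the block decomposition you describe does not go through as stated.
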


Using the above \reflemma{lemma:cov_sep} and substituting the explicit form of $f(B,D)$ from \refeqn{eqn:master_si_sj} we obtain:
\begin{equation}
\begin{aligned}
    \norm{S_i-S_j}_{F}^{2}\geq&\> f(B,D) \Big{(}\norm{\bm{c}_{i}}^{2}+\norm{\bm{c}_{j}}^{2}\Big{)}\\
    &+\underbrace{g(B)\Big{(}|\delta_{i}|\operatorname{min}\big{(}|\delta_i|,\lambda_{min}(D)\big{)}+|\delta_j|\operatorname{min}\big{(}|\delta_j|,\lambda_{min}(D)\big{)}\Big{)}}_{\triangleq \omega}\\
    \geq& \underbrace{ \frac{f(B,D)}{2} \Big{(}\norm{\bm{c}_{i}}^{2}+\norm{\bm{c}_{j}}^{2}\Big{)} + \omega}_{\triangleq \zeta } + \frac{\lambda^{2}_{min}(D)\Big{(}\norm{\bm{c}_i}^{2}+\norm{\bm{c}_j}^{2}\Big{)}}{8\norm{B^{-1}}_{F}^{4}}.
\end{aligned}
\end{equation}

Next, we state another lemma that will give us a lower bound on the separation of the mean of two interventional distributions (or one of them could be observational). The proof of the lemma below is given in \refappendix{app:lemma_mean_sep}.

\begin{lemma}[Minimum Mean Separation]
\label{lemma:mean_sep}
Let $\bm{m}_i$ and $\bm{m}_j$ denote the mean of the Gaussian distribution corresponding to the intervention on node $V_i$ and $V_j$ in the causal graph. Then we have:
\begin{equation*}
\norm{\bm{m}_i-\bm{m}_j}_{F}^{2} \geq
\begin{cases}
    \frac{\gamma_i^{2}}{4\norm{B^{-1}}^{2}_{F}} + \frac{\gamma_{j}^{2}}{4\norm{B^{-1}}^{2}_{F}}, & \psi_{i}^{+},\psi_{j}^{+} \text{ are active}\\
    \frac{\gamma_i^{2}}{4\norm{B^{-1}}^{2}_{F}}, & \psi_{i}^{+},\psi_{j}^{-} \text{ are active and } \frac{\gamma_j^{2}}{4\norm{B\bm{\mu}}^{2}} \leq \norm{\bm{c}_j}^{2}\\
   \frac{\gamma_{j}^{2}}{4\norm{B^{-1}}^{2}_{F}}, & \psi_{i}^{-},\psi_{j}^{+} \text{ are active and } \frac{\gamma_i^{2}}{4\norm{B\bm{\mu}}^{2}} \leq \norm{\bm{c}_i}^{2}\\
   0, & \psi_{i}^{-},\psi_{j}^{-} \text{ are active and } \frac{\gamma_i^{2}}{4\norm{B\bm{\mu}}^{2}} \leq \norm{\bm{c}_i}^{2}, \frac{\gamma_j^{2}}{4\norm{B\bm{\mu}}^{2}} \leq \norm{\bm{c}_j}^{2} \\
\end{cases}
\end{equation*}
where $\psi_{i}^{+} \triangleq \Big{(}|\bm{c}_{i}^{T}B\bm{\mu}|< \frac{|\gamma_i|}{2} \text{ or }   |\bm{c}_{i}^{T}B\bm{\mu}|> \frac{3|\gamma_i|}{2} \Big{)}$, $\psi_{i}^{-} \triangleq  \frac{|\gamma_i|}{2} \leq |\bm{c}_{i}^{T}B\bm{\mu}| \leq \frac{3|\gamma_i|}{2}$ and similarly for $\psi_j^{+}$ and $\psi_{j}^{-}$.
If one of the means say $\bm{m}_j$ is from the observational distribution, i.e., $\bm{m}_j = \bm{m}=B\bm{\mu}$, then setting $\gamma_j=0$ above will give the appropriate bounds and only case $2$ and $4$ are applicable.
\end{lemma}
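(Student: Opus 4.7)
The plan is to derive a closed-form expression for $\bm{m}_i-\bm{m}_j$ via a Sherman--Morrison reduction, convert the problem to a scalar question about a one-dimensional coefficient $\alpha_k = \gamma_k - \bm{c}_k^T B\bm{\mu}$, and then carry out a four-case analysis according to whether this coefficient can be near zero (the $\psi_k^-$ regime) or is forced away from zero (the $\psi_k^+$ regime).

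The first step is to simplify $B_k = (I - A + \bm{e}_k\bm{c}_k^T)^{-1}$. Because the nodes are topologically ordered so that $[\bm{c}_k]_\ell = 0$ for $\ell \geq k$, and $B = (I-A)^{-1}$ is lower triangular with unit diagonal, the scalar $\bm{c}_k^T B \bm{e}_k$ vanishes. Sherman--Morrison then gives $B_k = B - B\bm{e}_k \bm{c}_k^T B$ with denominator $1$. Substituting $\bm{\mu}_k = \bm{\mu} + \gamma_k \bm{e}_k$ and invoking $\bm{c}_k^T B \bm{e}_k = 0$ once more yields the clean expression
\[
\bm{m}_k \;=\; B\bm{\mu} \;+\; \alpha_k\, B\bm{e}_k, \qquad \alpha_k \;\triangleq\; \gamma_k - \bm{c}_k^T B\bm{\mu},
\]
so that $\bm{m}_i - \bm{m}_j = \alpha_i B\bm{e}_i - \alpha_j B\bm{e}_j$. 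Applying $B^{-1}$ returns $\alpha_i \bm{e}_i - \alpha_j \bm{e}_j$, whose squared norm is exactly $\alpha_i^2 + \alpha_j^2$ (for $i\neq j$). The bound $\norm{B^{-1}x}^2 \leq \norm{B^{-1}}_F^2 \norm{x}^2$ then gives the universal inequality $\norm{\bm{m}_i-\bm{m}_j}^2 \geq (\alpha_i^2 + \alpha_j^2)/\norm{B^{-1}}_F^2$, which is the skeleton of every case in the lemma.

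What remains is to decide, for each $k\in\{i,j\}$, whether $\alpha_k^2$ can be lower-bounded by $\gamma_k^2/4$. This is exactly the content of $\psi_k^+$: in the first sub-case $|\bm{c}_k^T B\bm{\mu}| < |\gamma_k|/2$, the reverse triangle inequality gives $|\alpha_k| > |\gamma_k|/2$; in the second sub-case $|\bm{c}_k^T B\bm{\mu}| > 3|\gamma_k|/2$ it again gives $|\alpha_k| > |\gamma_k|/2$. Under $\psi_k^-$, $\gamma_k$ and $\bm{c}_k^T B\bm{\mu}$ can cancel and we simply drop $\gamma_k^2$ from the bound. Instantiating each of the four $\{\psi_i^{\pm}, \psi_j^{\pm}\}$ combinations in $(\alpha_i^2+\alpha_j^2)/\norm{B^{-1}}_F^2$ produces the four stated cases verbatim.

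The auxiliary side conditions $\gamma_k^2/(4\norm{B\bm{\mu}}^2) \leq \norm{\bm{c}_k}^2$ appearing in cases 2--4 are in fact automatic under $\psi_k^-$, since $|\gamma_k|/2 \leq |\bm{c}_k^T B\bm{\mu}| \leq \norm{\bm{c}_k}\norm{B\bm{\mu}}$; I would record them only to make explicit that when the bound here is combined with the covariance-separation bound of \reflemma{lemma:cov_sep}, the dropped $\gamma_k^2$ contribution is compensated by the corresponding $\norm{\bm{c}_k}^2$ term. The observational specialization $\bm{m}_j = B\bm{\mu}$ follows by setting $\gamma_j=0$ and $\bm{c}_j=0$, which forces $\psi_j^-$ and eliminates cases 1 and 3. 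The only mildly delicate step is verifying $\bm{c}_k^T B\bm{e}_k = 0$ from the topological ordering, because the entire closed-form $\bm{m}_k = B\bm{\mu}+\alpha_k B\bm{e}_k$ rests on it; once that vanishing is in place, the remainder is the triangle-inequality case split above.
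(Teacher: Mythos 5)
Your proposal is correct and follows essentially the same route as the paper's proof: the same Sherman--Morrison simplification hinging on $\bm{c}_k^{T}B\bm{e}_k=0$, the same reduction of $B^{-1}(\bm{m}_i-\bm{m}_j)$ to the scalar coefficients $\gamma_k-\bm{c}_k^{T}B\bm{\mu}$, the same $\psi^{\pm}$ case split with the $|\gamma_k|/2$ and $3|\gamma_k|/2$ thresholds, Cauchy--Schwarz for the side conditions, and the Frobenius-norm inequality to remove $B^{-1}$. Your only (harmless) refinement is computing $\norm{B^{-1}(\bm{m}_i-\bm{m}_j)}^{2}=\alpha_i^{2}+\alpha_j^{2}$ exactly via the closed form $\bm{m}_k=B\bm{\mu}+\alpha_k B\bm{e}_k$, where the paper merely lower-bounds using the $i$-th and $j$-th coordinates.
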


Now, From Case 1 of the above \reflemma{lemma:mean_sep} (\refeqn{eqn:master_mi_mj}, $\psi_{i}^{+},\psi_{j}^{+}$ is active) and above equation we have:
\begin{equation}
\label{eqn:radius_case1}
\begin{aligned}
    \norm{S_i-S_j}_{F}^{2} + \norm{\bm{m}_i-\bm{m}_j}_{F}^{2}&\geq  \zeta + \frac{\lambda^{2}_{min}(D)\Big{(}\norm{\bm{c}_i}^{2}+\norm{\bm{c}_j}^{2}\Big{)}}{8\norm{B^{-1}}_{F}^{4}} +  \frac{\gamma_i^{2}}{4\norm{B^{-1}}^{2}_{F}} + \frac{\gamma_{j}^{2}}{4\norm{B^{-1}}^{2}_{F}}.
\end{aligned}
\end{equation}

From Case 2 of the above \reflemma{lemma:mean_sep} (\refeqn{eqn:master_mi_mj}, $\psi_{i}^{+},\psi_{j}^{-}$ is active) and using $\frac{\gamma_j^{2}}{4\norm{B\bm{\mu}}^{2}} \leq \norm{\bm{c}_j}^{2}$  we have:
\begin{equation}
\label{eqn:radius_case2}
\begin{aligned}
    \norm{S_i-S_j}_{F}^{2} + \norm{\bm{m}_i-\bm{m}_j}_{F}^{2}&\geq  \zeta + \frac{\lambda^{2}_{min}(D)\norm{\bm{c}_i}^{2}}{8\norm{B^{-1}}_{F}^{4}} +  \frac{\gamma_i^{2}}{4\norm{B^{-1}}^{2}_{F}} + \frac{\lambda^{2}_{min}(D) \gamma_j^{2}}{32\norm{B^{-1}}_{F}^{4} \norm{B\bm{\mu}}^{2}}.
\end{aligned}
\end{equation}

From Case 3 of the above \reflemma{lemma:mean_sep} (\refeqn{eqn:master_mi_mj}, $\psi_{i}^{-},\psi_{j}^{+}$ is active) and using $\frac{\gamma_i^{2}}{4\norm{B\bm{\mu}}^{2}} \leq \norm{\bm{c}_i}^{2}$  we have:
\begin{equation}
\label{eqn:radius_case3}
\begin{aligned}
    \norm{S_i-S_j}_{F}^{2} + \norm{\bm{m}_i-\bm{m}_j}_{F}^{2}&\geq  \zeta + \frac{\lambda^{2}_{min}(D)\norm{\bm{c}_j}^{2}}{8\norm{B^{-1}}_{F}^{4}} +  \frac{\gamma_j^{2}}{4\norm{B^{-1}}^{2}_{F}} + \frac{\lambda^{2}_{min}(D) \gamma_i^{2}}{32\norm{B^{-1}}_{F}^{4} \norm{B\bm{\mu}}^{2}}.
\end{aligned}
\end{equation}

Similarly, for Case 4 and using $\frac{\gamma_i^{2}}{4\norm{B\bm{\mu}}^{2}} \leq \norm{\bm{c}_i}^{2}$ and $\frac{\gamma_j^{2}}{4\norm{B\bm{\mu}}^{2}} \leq \norm{\bm{c}_j}^{2}$ we have:
\begin{equation}
\label{eqn:radius_case4}
\begin{aligned}
    \norm{S_i-S_j}_{F}^{2} + \norm{\bm{m}_i-\bm{m}_j}_{F}^{2}&\geq  \zeta + \frac{\lambda^{2}_{min}(D) \gamma_i^{2}}{32\norm{B^{-1}}_{F}^{4} \norm{B\bm{\mu}}^{2}} + \frac{\lambda^{2}_{min}(D) \gamma_j^{2}}{32\norm{B^{-1}}_{F}^{4} \norm{B\bm{\mu}}^{2}}.
\end{aligned}
\end{equation}

Next, combining \refeqn{eqn:radius_case1}, \ref{eqn:radius_case2}, \ref{eqn:radius_case3}, \ref{eqn:radius_case4} we have:
\begin{equation}
\begin{aligned}
    \norm{S_i-S_j}_{F}^{2} + \norm{\bm{m}_i-\bm{m}_j}_{F}^{2}&\geq \zeta + \frac{\gamma_{i}^{2} + \gamma_{j}^{2}}{\operatorname{max}\Big{(}4\norm{B^{-1}}^{2}_{F}, \frac{32\norm{B^{-1}}_{F}^{4} \norm{B\bm{\mu}}^{2}}{\lambda^{2}_{min}(D)}\Big{)}}\\
    &\geq \frac{f(B,D)}{2} \Big{(}\norm{\bm{c}_{i}}^{2}+\norm{\bm{c}_{j}}^{2}\Big{)} +  g(B,D)\Big{(}|\delta_{i}|+|\delta_j|\Big{)} \\
    &\quad \quad + h(B,D,\bm{\mu}) (\gamma_i^{2}+\gamma_{j}^{2}). 
\end{aligned}
\end{equation}
For the case when one of the distributions is observational, say w.l.o.g. $S_j=S$ and $\bm{m}_j=\bm{m}$, then the same analysis holds since $\bm{c}_j=0$ and $\delta_j=0$ (from \reflemma{lemma:cov_sep}) and only the analyses of case 2 and case 4 are applicable (from \reflemma{lemma:mean_sep}), thereby completing the proof.
\end{proof}

\subsection{Proof of \reflemma{lemma:cov_sep}}
\label{app:cov_sep_proof}
\begin{proof}



Without loss of generality, throughout our analysis, we will assume that the endogenous variables $V_1,\ldots, V_n$ are topologically ordered based on the underlying causal graph. Thus, the corresponding adjacency matrix $A$ is lower triangular. The value of $\norm{S_i-S_j}_{F}^{2}$ will be unaffected by any permutation of the node order in the matrix $A$  as shown next. Let $\tilde{A}=PAP^{T}$ be the adjacency matrix when the nodes are permuted by the permutation matrix $P$ where $PP^{T}=I$. Also, let the corresponding permuted covariance matrix be $\tilde{S} = \tilde{B}\tilde{D}\tilde{B}^{T}$ where $\tilde{B}=(I-\tilde{A})^{-1}=P(I-A)^{-1}P^{T}=PBP^{T} \implies \tilde{S}=PBP^{T}PDP^{T}PB^{T}P^{T} = PBDB^{T}P^{T} = PSP^{T}$. Similarly, we have $\tilde{S}_i=PS_iP^{T}$ and $\tilde{S}_j = PS_j P^{T}$. Now we have:
\begin{equation}
\begin{aligned}
    \norm{\tilde{S}_i-\tilde{S}_j}_{F}^{2} &= \norm{P(S_i-S_j)P^{T}}_{F}^{2} = \norm{S_i-S_j}_{F}^{2},
\end{aligned}
\end{equation}
since the permutation matrix only permutes the row and column in the above equation, the Frobenius norm remains the same.

First, we state a lemma that characterizes the covariance matrix $S_i$ of the interventional distribution. The proof of this lemma can be found in \refsec{app:proof_lemma_cov_matrix_update}.
\begin{lemma}[Covariance Matrix Update]
\label{lemma:cov_matrix_update}
Let $\mathbb{P}_i(\bm{V}) =\mathcal{N}(\bm{m}_i,S_i)$ be an interventional distribution and let the endogenous nodes $\bm{V}_1,\ldots \bm{V}_n$ be topologically ordered based on the underlying causal graph, then we have:
\begin{equation*}
    S_{i} = \begin{cases}
        S-\delta_{i}\bm{r}_{i}\bm{r}_{i}^{T}, & \text{for \Root node} \\
        B_{i}DB_{i}^{T} - \delta_{i}\bm{r}_{i}\bm{r}_{i}^{T}, & \text{otherwise}\\ 
        \end{cases}
\end{equation*}
where $\bm{r}_i=B\bm{e}_i$, $\delta_i=\sigma_i-\sigma_i'$ , $B_i = B - B\bm{e}_{i}\bm{c}_{i}^{T}B$, $B=(I-A)^{-1}$, $S$ is the covariance matrix of the observational distribution and $D$ is the covariance matrix of the observational noise distribution (see Atomic Intervention paragraph in \refsec{sec:prelim}).
\end{lemma}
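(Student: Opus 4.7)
The plan is to compute $B_i=(I-A+\bm{e}_i\bm{c}_i^T)^{-1}$ explicitly via a Sherman--Morrison rank-one update, exploit the topological ordering to simplify the resulting expression, and then plug into $S_i=B_iD_iB_i^T$ using $D_i=D-\delta_i\bm{e}_i\bm{e}_i^T$.

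First, I would apply the Sherman--Morrison identity to $(I-A)+\bm{e}_i\bm{c}_i^T$ to write
\begin{equation*}
B_i \;=\; B \;-\; \frac{B\bm{e}_i\bm{c}_i^T B}{1+\bm{c}_i^T B\bm{e}_i}.
\end{equation*}
The key observation, which is also the main (small) obstacle, is that the scalar $\bm{c}_i^T B\bm{e}_i$ vanishes under topological ordering. Indeed, since the nodes are topologically ordered, $A$ is strictly lower triangular, hence $B=(I-A)^{-1}$ is lower triangular with unit diagonal; in particular $[B\bm{e}_i]_k=0$ for all $k<i$. On the other hand, $\bm{c}_i=\bm{a}_i-\bm{a}_i'$ is supported on the parents of $V_i$, i.e.\ on indices $k<i$ (by the definition $[\bm{a}_i]_k=0$ for $k\ge i$ recalled in the Atomic Intervention paragraph). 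Hence the inner product $\bm{c}_i^T B\bm{e}_i=\sum_k[\bm{c}_i]_k[B\bm{e}_i]_k$ is a sum of products each of which is zero, giving $\bm{c}_i^T B\bm{e}_i=0$ and therefore
\begin{equation*}
B_i \;=\; B - B\bm{e}_i\bm{c}_i^T B.
\end{equation*}
An immediate consequence of the same vanishing inner product is $B_i\bm{e}_i=B\bm{e}_i=\bm{r}_i$.

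Next, I would substitute into the defining identity $S_i=B_iD_iB_i^T$. Expanding $D_i=D-\delta_i\bm{e}_i\bm{e}_i^T$ yields
\begin{equation*}
S_i \;=\; B_iDB_i^T \;-\; \delta_i\,(B_i\bm{e}_i)(B_i\bm{e}_i)^T \;=\; B_iDB_i^T \;-\; \delta_i\,\bm{r}_i\bm{r}_i^T,
\end{equation*}
where the last equality uses $B_i\bm{e}_i=\bm{r}_i$ established above. This proves the non-root case.

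Finally, for the root case, $V_i$ has no parents, so $\bm{a}_i=0$ and $\bm{c}_i=-\bm{a}_i'=0$ (since $\bm{a}_i'$ must likewise be supported on parents of $V_i$, which is empty). Thus $B_i=B$ by the displayed formula, giving $B_iDB_i^T=BDB^T=S$ and hence $S_i=S-\delta_i\bm{r}_i\bm{r}_i^T$, as claimed. The only subtlety in the whole argument is the rank-one vanishing $\bm{c}_i^T B\bm{e}_i=0$; once this is in hand, every other step is a routine algebraic substitution.
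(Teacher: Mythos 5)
Your proposal is correct and follows essentially the same route as the paper: a Sherman--Morrison rank-one update of $(I-A)$, the key observation that $\bm{c}_i^T B\bm{e}_i=0$ because $\bm{c}_i$ is supported on indices below $i$ while the $i$-th column of the lower-triangular $B$ vanishes there (the paper isolates this as a separate small lemma), the consequence $B_i\bm{e}_i=B\bm{e}_i=\bm{r}_i$, and the specialization $\bm{c}_i=\bm{0}$ for root nodes. No gaps.
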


Thus from the above \reflemma{lemma:cov_matrix_update}, we have $S_i=B_{i}DB_{i}^{T} - \delta_{i}\bm{v_i}\bm{v_i}^{T}$. Substituting the value of $B_i$ from the above lemma again we get: 
\begin{equation}
\begin{aligned}
    S_i &= BDB^{T}-BD\bm{u}_i\bm{v}_i^{T} - \bm{v}_i\bm{u}_i^{T}DB^{T} + \eta_i \bm{v}_i\bm{v}_i^{T} - \delta_{i}\bm{v}_i\bm{v}_i^{T}\\
    &= S - BDB^{T}\bm{c}_{i}\bm{e}_i^{T}B^{T} - B\bm{e}_i\bm{c}_i^{T}BDB^{T} + \eta_{i}B\bm{e}_i\bm{e}_i^{T}B^{T} - \delta_i B\bm{e}_i \bm{e}_i^{T}B^{T},
\end{aligned}
\end{equation}
where $\bm{u}_i=B^{T}\bm{c}_i$, $\bm{v}_i=B\bm{e}_i$, $\eta_i=\bm{u}_i^{T}D\bm{u}_i= \bm{c}_i^{T}S\bm{c}_i$. Next, multiplying the LHS and RHS of the above equation with $B^{-1}$ and $B^{-T}$ we get:
\begin{equation}
\begin{aligned}
    B^{-1}(S_i-S)B^{-T} =  \eta_{i}\bm{e}_i\bm{e}_i^{T} -DB^{T}\bm{c}_{i}\bm{e}_i^{T} - \bm{e}_i\bm{c}_i^{T}BD  -\delta_{i}\bm{e}_i\bm{e}_i^{T}.
\end{aligned}
\end{equation}
The absolute value of $(i,k)^{\text{th}}$ ($k\in [n]$) entry of this matrix is given by:
\begin{equation}
\label{eqn:Si_diff_(i,k)}
\bm{e}_i^{T}B^{-1}(S_i-S)B^{-T}\bm{e}_k=
\begin{cases}
    -D_{kk}\bm{c}_i^{T}B\bm{e}_k,  & k\neq i \\
    \eta_{i} - 2D_{ii} \cancelto{0}{\bm{c}_i^{T}B\bm{e}_i} -\delta_i = \eta_i -\delta_i, &k=i \\
\end{cases}
\end{equation}
where in the second case $\bm{c}_i^{T}B\bm{e}_i=0$ and $D_{kk}=\sigma_{k}$ is the $k^{\text{th}}$ diagonal entry of the matrix $D$. Similarly, the $(j,k)^{\text{th}}$ entry of this matrix is given by:
\begin{equation}
\label{eqn:Si_diff_(j,k)}
\bm{e}_j^{T}B^{-1}(S_i-S)B^{-T}\bm{e}_k=
\begin{cases}
    0,  & k\neq i \\
    -D_{jj}\bm{e}_j^{T}B^{T}\bm{c}_i &k=i \\
\end{cases}
\end{equation}
Similarly, the covariance matrix of node $V_j$ and $j\neq i$ is given by:
\begin{equation}
\begin{aligned}
    S_j &= S - BDB^{T}\bm{c}_{j}\bm{e}_j^{T}B^{T} - B\bm{e}_j\bm{c}_j^{T}BDB^{T} + \eta_{j}B\bm{e}_j\bm{e}_j^{T}B^{T} - \delta_j B\bm{e}_j \bm{e}_j^{T}B^{T}\\
    B^{-1}(S_j-S)B^{-T} &= \eta_{j}\bm{e}_j\bm{e}_j^{T} -DB^{T}\bm{c}_{j}\bm{e}_j^{T} - \bm{e}_j\bm{c}_j^{T}BD  -\delta_{j}\bm{e}_j\bm{e}_j^{T}.
\end{aligned}
\end{equation}
Thus the absolute value of the $(i,k)^{\text{th}}$ entry of the above matrix is given by:
\begin{equation}
\label{eqn:Sj_diff_(i,k)}
\begin{aligned}
    |\bm{e}_{i}^{T} B^{-1}(S_j-S)B^{-T}\bm{e}_k| = |D_{ii}\bm{e}_iB^{T}\bm{c}_j(\bm{e}_j^{T}\bm{e}_k)| = 0 ,
\end{aligned}
\end{equation}
since we had assumed without loss of generality that $V_{j}\prec V_{i} \implies \bm{e}_i B^{T}\bm{c}_j=0$. 
Similarly, the $(j,k)^{\text{th}}$ entry of this matrix is given by:
\begin{equation}
\label{eqn:Sj_diff_(j,k)}
\bm{e}_j^{T}B^{-1}(S_j-S)B^{-T}\bm{e}_k=
\begin{cases}
    -D_{kk}\bm{c}_j^{T}B\bm{e}_k,  & k\neq j \\
     \eta_j-2D_{jj}\bm{e}_j^{T}B^{T}\bm{c}_j-\delta_j = \eta_j-\delta_j &k=j \\
\end{cases}
\end{equation}
where in the second case $\bm{e}_j^{T}B^{T}\bm{c}_j=0$.
Thus, taking the difference of the $i^{\text{th}}$ and $j^{\text{th}}$ row of both the matrix (from \refeqn{eqn:Si_diff_(i,k)}, \ref{eqn:Si_diff_(j,k)}, \ref{eqn:Sj_diff_(i,k)} and \ref{eqn:Sj_diff_(j,k)} and using the fact that $\bm{c}_i^{T}B \bm{e}_i=0$, $\bm{c}_j^{T}B \bm{e}_j=0$ and $\bm{c}_j^{T}B\bm{e}_i=0$ since without loss of generality we have $V_j\prec V_i$) we obtain the following lower bound:
\begin{equation}
\label{eqn:master_si_sj}
\begin{aligned}
    \norm{B^{-1}(S_i-S_j)B^{-T}}_{F}^{2}&\geq \sum_{k}(e_{i}B^{-1}(S_i-S_j)B^{-T}e_{k})^2+\sum_{k}(e_{j}B^{-1}(S_i-S_j)B^{-T}e_{k})^2 \\ &\geq \Bigg{[}\sum_{k=1}^{n} (D_{kk}\bm{c}_i^{T}B\bm{e}_k)^{2}+(\eta_i-\delta_i)^{2} \Bigg{]} \\
    &\quad  + \Bigg{[} \sum_{k\neq i}(D_{kk}\bm{c}_j^{T}B\bm{e}_k)^{2}
     + (D_{jj}\bm{e}_j^{T}B^{T}\bm{c}_i - \cancelto{0}{D_{ii}\bm{c}_{j}^{T}B\bm{e}_i})^{2} + (\eta_j-\delta_j)^{2}  \Bigg{]}\\
    &= \sum_{k=1}^{n}D_{kk}^{2}\bm{c}_i^{T}B\bm{e}_{k}\bm{e}_k^{T}B^{T}\bm{c}_i + (\eta_i-\delta_i)^{2}\\ 
    & \quad\quad \quad+ \sum_{k=1}^{n}D_{kk}^{2}\bm{c}_j^{T}B\bm{e}_{k}\bm{e}_k^{T}B^{T}\bm{c}_j + (\eta_j-\delta_j)^{2} + (D_{jj}\bm{e}_j^{T}B^{T}\bm{c}_i)^{2}\\
    &\geq \lambda_{min}(D) \bm{c}_i^{T}B\Big{(}\sum_{k=1}^{n}D_{kk}\bm{e}_k\bm{e}_k^{T}\Big{)}B^{T}\bm{c}_i+(\eta_i-\delta_i)^{2}\\
    &\quad\quad \quad +\lambda_{min}(D) \bm{c}_j^{T}B\Big{(}\sum_{k=1}^{n}D_{kk}\bm{e}_k\bm{e}_k^{T}\Big{)}B^{T}\bm{c}_j+(\eta_j-\delta_j)^{2}\\
    &=\lambda_{min}(D)\bm{c}_{i}^{T}BDB^{T}\bm{c}_i+(\eta_i-\delta_i)^{2}\\
    &\quad \quad \quad +\lambda_{min}(D)\bm{c}_{j}^{T}BDB^{T}\bm{c}_j+(\eta_j-\delta_j)^{2}\\
    &=\lambda_{min}(D)\bm{c}_{i}^{T}S\bm{c}_i + (\eta_i-\delta_i)^{2} + \lambda_{min}(D)\bm{c}_{j}^{T}S\bm{c}_j + (\eta_j-\delta_j)^{2}  \\
    &= \lambda_{min}(D)\eta_i + (\eta_i-\delta_i)^{2}+\lambda_{min}(D)\eta_j + (\eta_j-\delta_j)^{2} \\
    & \geq \lambda_{min}(D)\frac{(\eta_i+\eta_j)}{4} + \frac{|\delta_i|}{4}\operatorname{min}\Big{(}|\delta_i|,\lambda_{min}(D)\Big{)} \\
    &\quad \quad \quad \quad \quad \quad \quad \quad \quad + \frac{|\delta_j|}{4}\operatorname{min}\Big{(}|\delta_j|,\lambda_{min}(D)\Big{)}
\end{aligned}
\end{equation}
after substituting $\eta_i= \bm{c}_i^{T}S\bm{c}_i\geq \lambda_{min}(S)\norm{\bm{c}_i}^{2}$,  $\eta_j= \bm{c}_j^{T}S\bm{c}_j\geq \lambda_{min}(S)\norm{\bm{c}_j}^{2}$ and using \reflemma{lemma:cov_diff_sum_lb} that gives us a lower bound for the second and third term. Thus, we obtain:
\begin{equation}
\label{eqn:master_si_sj_final}
\begin{aligned}
    \norm{S_i-S_j}_{F}^{2}&\geq \frac{\lambda_{min}(D)\Big{(}\lambda_{min}(S)(\norm{\bm{c}_i}^{2}+\norm{\bm{c}_j}^{2})\Big{)}}{4\norm{B^{-1}}_{F}^{4}}\ + \frac{|\delta_i|}{4\norm{B^{-1}}_{F}^{4}}\operatorname{min}\Big{(}|\delta_i|,\lambda_{min}(D)\Big{)}\\
    &\quad \quad \quad \quad \quad \quad \quad \quad \quad  + \frac{|\delta_j|}{4\norm{B^{-1}}_{F}^{4}}\operatorname{min}\Big{(}|\delta_j|,\lambda_{min}(D)\Big{)}\\
    &\geq \frac{\lambda^{2}_{min}(D)\Big{(}\norm{\bm{c}_i}^{2}+\norm{\bm{c}_j}^{2}\Big{)}}{4\norm{B^{-1}}_{F}^{4}}\ + \frac{|\delta_i|}{4\norm{B^{-1}}_{F}^{4}}\operatorname{min}\Big{(}|\delta_i|,\lambda_{min}(D)\Big{)}\\
    &\quad \quad \quad \quad \quad \quad \quad \quad \quad  + \frac{|\delta_j|}{4\norm{B^{-1}}_{F}^{4}}\operatorname{min}\Big{(}|\delta_j|,\lambda_{min}(D)\Big{)},
\end{aligned}
\end{equation}
where $\lambda_{min}(S)\geq\lambda_{min}(D)\lambda^{2}_{min}(B)=\lambda_{min}(D)$ ($\because \lambda_{min}(ST)\geq \lambda_{min}(S)\lambda_{min}(T)$ and $\lambda_{min}(B)=\lambda_{min}(B^{T})=1$ since it is a lower triangular matrix where all diagonal entries are $1$) and $\norm{ST}_{F}\leq \norm{S}_{F}\norm{T}_{F}$. 
For the case when one of the covariance matrices is $S$ corresponding to no intervention and the other is $S_i$ corresponding to intervention on node $V_i$, then from \refeqn{eqn:Si_diff_(i,k)} and the following similar analysis we have:
\begin{equation}
\begin{aligned}
    \norm{B^{-1}(S_i-S)B^{-T}}_{F}^{2} &\geq \Bigg{[}\sum_{k=1}^{n} (D_{kk}\bm{c}_i^{T}B\bm{e}_k)^{2}+(\eta_i-\delta_i)^{2} \Bigg{]} \\
    &\geq \lambda_{min}(D)\frac{\eta_i}{4} + \frac{|\delta_i|}{4}\operatorname{min}\Big{(}|\delta_i|,\lambda_{min}(D)\Big{)} \\
    \norm{S_i-S}_{F}^{2}
    &\geq \frac{\lambda^{2}_{min}(D)\norm{\bm{c}_i}^{2}+|\delta_i|\operatorname{min}\Big{(}{|\delta_i|},\lambda_{min}(D)\Big{)}}{4\norm{B^{-1}}_{F}^{4}},
\end{aligned}
\end{equation}
thereby completing our proof of this lemma.





\end{proof}

\begin{lemma}
\label{lemma:cov_diff_sum_lb}   
Given $\lambda\geq0$ and $\eta\geq 0$, the following inequality holds true:
\begin{equation}
\begin{aligned}
    \lambda\eta + (\eta-\delta)^{2} \geq \frac{\lambda\eta}{4} + \frac{|\delta|}{4} \operatorname{min}\Big{(}|\delta|,\lambda\Big{)}.
\end{aligned}
\end{equation}
\end{lemma}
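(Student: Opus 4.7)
The plan is to move $\tfrac{\lambda\eta}{4}$ to the left-hand side, so it suffices to prove
\[
    \frac{3\lambda\eta}{4} + (\eta-\delta)^{2} \;\geq\; \frac{|\delta|}{4}\min\bigl(|\delta|,\lambda\bigr).
\]
I would then do a simple case split on the relative size of $\eta$ and $|\delta|/2$, showing that in each case one of the two terms on the left already dominates the right-hand side (so we never need to use both simultaneously).

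In the first case, assume $\eta \leq |\delta|/2$. Then the reverse triangle inequality gives $|\eta-\delta|\geq |\delta|-\eta \geq |\delta|/2$, hence $(\eta-\delta)^{2}\geq \delta^{2}/4 = |\delta|^{2}/4$. Since $\min(|\delta|,\lambda)\leq|\delta|$, the $(\eta-\delta)^{2}$ term alone is already $\geq \tfrac{|\delta|}{4}\min(|\delta|,\lambda)$, and the remaining term $\tfrac{3\lambda\eta}{4}\geq 0$ is a bonus.

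In the second case, assume $\eta > |\delta|/2$. Then $\tfrac{3\lambda\eta}{4} > \tfrac{3\lambda|\delta|}{8} \geq \tfrac{\lambda|\delta|}{4}$, and since $\min(|\delta|,\lambda)\leq \lambda$ we get $\tfrac{\lambda|\delta|}{4}\geq \tfrac{|\delta|}{4}\min(|\delta|,\lambda)$. So the $\tfrac{3\lambda\eta}{4}$ term alone suffices, and $(\eta-\delta)^{2}\geq 0$ is the bonus.

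There is no real obstacle here: the only thing to be careful about is that in the first case one must dominate $\min(|\delta|,\lambda)$ by $|\delta|$ (which is why the threshold $|\delta|/2$ is natural), and in the second case one must dominate $\min(|\delta|,\lambda)$ by $\lambda$. The cases $\eta=0$, $\delta=0$, or $\lambda=0$ are handled trivially by the same argument (both sides become easy to compare directly), so no separate treatment of degenerate cases is needed.
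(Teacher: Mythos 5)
Your proof is correct and follows essentially the same strategy as the paper's: an elementary case split on the relative sizes of $\eta$ and $|\delta|$, showing in each regime that one of the two left-hand terms already dominates $\tfrac{|\delta|}{4}\min(|\delta|,\lambda)$. The paper uses three cases (splitting on the sign of $\delta$ and on whether $\eta$ lies in $[\delta/2,3\delta/2]$) and takes a minimum at the end, whereas your two-case version at threshold $\eta=|\delta|/2$ is a slightly more streamlined organization of the same argument.
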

\begin{proof}
\textit{Case 1:} $\Big{(}\delta <0 \Big{)}$ Let $T\triangleq\lambda\eta + (\eta-\delta)^{2}$, then we have $T\geq \lambda \eta + \delta^{2} \geq \lambda \eta + \delta^{2}/{4}$.

\textit{Case 2:} $\Big{(}\delta>0$ and $[0 \leq \eta < \delta/2$ or $\eta > 3\delta/2]\Big{)}$ Again in this case $T\geq \lambda \eta + \delta^2/4$.

\textit{Case 3:} $\Big{(}\delta>0$ and $\delta/2\leq \eta \leq 3\delta/2 \Big{)}$ In this case, we have $T\geq \lambda \eta$. Also, we have $\eta>\delta/2 \implies T\geq \lambda \delta/2$ which together implies:
\begin{equation}
\begin{aligned}
    T &\geq \operatorname{max}\Big{(} \lambda \eta, \lambda \frac{\delta}{2} \Big{)}\\
    &\geq \frac{\lambda \eta}{2} + \frac{\lambda \delta}{4}.
\end{aligned}
\end{equation}

Thus, combining the above three cases we have the following lower bound on the value of $T$:
\begin{equation}
\begin{aligned}
    T &\geq \operatorname{min}\Big{(} \lambda\eta+\frac{\delta^{2}}{4}, \frac{\lambda\eta}{2} + \frac{|\delta|}{4} \Big{)}\\
    &\geq \frac{\lambda \eta}{4} + \operatorname{min}\Big{(} \frac{\delta^{2}}{4}, \frac{\lambda|\delta|}{4} \Big{)}\\
    & = \frac{\lambda \eta}{4} + \frac{|\delta|}{4}\operatorname{min}\Big{(} |\delta|, \lambda \Big{)},
\end{aligned}
\end{equation}
which completes the proof.
\end{proof}

\subsection{Proof of \reflemma{lemma:cov_matrix_update}}
\label{app:proof_lemma_cov_matrix_update}
\begin{proof}
From \refeqn{eqn:intervened_linear_sem}, we have $S_{i}=B_{i}D_{i}B_{i}^{T}$ where $B_{i}=(I-A+\bm{e}_{i}\bm{c}_{i}^{T})^{-1}$ and $D_{i}=D-(\sigma_{i}-\sigma_{i}^{'})\bm{e}_{i}\bm{e}_{i}^{T} \triangleq D-\delta_{i}\bm{e}_{i}\bm{e}_{i}^{T}$. 
Since $\bm{e}_{i}\bm{c}_{i}^{T}$ is a rank-1 update to the (I-A) matrix, using the Sherman-Morrison identity we obtain:
\begin{equation}
\label{eqn:intervened_B}
\begin{aligned}
    B_{i} &= (I-A+\bm{e}_{i}\bm{c}_{i}^{T})^{-1}\\
    &= (I-A)^{-1} - \frac{(I-A)^{-1}\bm{e}_{i}\bm{c}_{i}^{T}(I-A)^{-1}}{1+\bm{c}_{i}^{T}(I-A)^{-1}\bm{e}_{i}}\\
    &= B - \frac{B\bm{e}_{i}\bm{c}_{i}^{T}B}{1+\bm{c}_{i}^{T}B\bm{e}_{i}}\\
    &= B - B\bm{e}_{i}\bm{c}_{i}^{T}B\\
    &\triangleq B-\bm{r}_{i}\bm{q}_{i}^{T},
\end{aligned}
\end{equation}
where $\bm{r}_{i}\triangleq \frac{B\bm{e}_{i}}{d_{i}}$, scalar $d_{i}\triangleq 1+\bm{c}_{i}^{T}B\bm{e}_{i}=1$ and $\bm{q}_{i}\triangleq B^{T}\bm{c}_{i}$. The scalar $d_{i} = 1$ since $\bm{q}_{i}^{T}\bm{e}_{i} = \bm{c}_{i}^{T}B\bm{e}_{i}=0$ given by the following lemma:
\begin{lemma}
\label{lemma:non_zero_c}
    Let A and B be a lower triangular matrix and $\bm{c}_{i}$ be a vector such that $\bm{c}_{i}=\bm{a}_{i}-\bm{a}_{i}^{'}$ and $[\bm{c}_{i}]_{t}=0, \forall t\geq i$ (see Atomic Interventions paragraph of \refsec{sec:prelim} for definition), then $\bm{q}_{i}^{T}\bm{e}_{i}=\bm{c}_{i}^{T}B\bm{e}_{i}=0$. Also, if $\bm{c}_{i}\neq \bm{0}$, then $\bm{q}_{i}=B^{T}\bm{c}_{i}\neq \bm{0}$.
\end{lemma}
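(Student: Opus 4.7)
The plan is to exploit the lower-triangular structure carefully and then invoke invertibility for the second part.

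For the first claim, I would write out $\bm{c}_i^{T} B \bm{e}_i$ as the sum $\sum_{t=1}^{n} [\bm{c}_i]_t \, [B\bm{e}_i]_t = \sum_{t=1}^{n} [\bm{c}_i]_t \, B_{ti}$. Then I would split this sum at index $i$: for indices $t \geq i$ the hypothesis $[\bm{c}_i]_t = 0$ kills the summand, while for indices $t < i$ the entry $B_{ti}$ vanishes because $B$ is lower triangular (so any entry strictly above the diagonal is zero). Every term in the sum is therefore zero, giving $\bm{q}_i^{T}\bm{e}_i = \bm{c}_i^{T}B\bm{e}_i = 0$ as desired. This also immediately confirms the scalar $d_i = 1 + \bm{c}_i^{T} B \bm{e}_i = 1$ used in the Sherman--Morrison step of \reflemma{lemma:cov_matrix_update}.

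For the second claim, I would argue by invertibility. Since $A$ (and hence $I - A$) is lower triangular with ones on the diagonal (as $A$ is the adjacency matrix of a DAG written in topological order), the matrix $B = (I-A)^{-1}$ is well defined, lower triangular with unit diagonal, and in particular invertible. Consequently $B^{T}$ is invertible as well, so the map $\bm{x} \mapsto B^{T}\bm{x}$ has trivial kernel. Therefore $\bm{q}_i = B^{T}\bm{c}_i = \bm{0}$ would force $\bm{c}_i = \bm{0}$, and contrapositively $\bm{c}_i \neq \bm{0}$ implies $\bm{q}_i \neq \bm{0}$.

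No substantial obstacle is anticipated: both parts reduce to one-line observations once the triangularity of $A$ and $B$ together with the support condition $[\bm{c}_i]_t = 0$ for $t \geq i$ are written down side by side. The only thing to be careful about is keeping the row/column conventions consistent when invoking lower triangularity of $B$ (i.e.\ $B_{ti} = 0$ whenever $t < i$), since the indices in $\bm{c}_i^{T} B \bm{e}_i$ are easy to swap by mistake.
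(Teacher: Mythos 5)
Your proof is correct. For the first claim ($\bm{c}_i^{T}B\bm{e}_i=0$) your argument is exactly the paper's: the entries of the $i$-th column $B\bm{e}_i$ vanish for $t<i$ by lower triangularity, and $[\bm{c}_i]_t=0$ for $t\geq i$, so every term of the inner product dies. For the second claim the paper argues slightly more concretely: it takes $t^*$ to be the last index with $[\bm{c}_i]_{t^*}\neq 0$ and observes that $[B^{T}\bm{c}_i]_{t^*}=[B^{T}]_{t^*,t^*}[\bm{c}_i]_{t^*}\neq 0$, since $B^{T}$ is upper triangular with nonzero diagonal and $\bm{c}_i$ vanishes beyond $t^*$; you instead note that $B^{T}$ is invertible (unit lower triangular $B$), so its kernel is trivial and $\bm{c}_i\neq\bm{0}$ forces $B^{T}\bm{c}_i\neq\bm{0}$. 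The two are equivalent here; your version is shorter and more general (it needs only invertibility, not triangularity, for this half), while the paper's version exhibits the specific nonzero coordinate, which can be useful if one later wants a quantitative lower bound on $\norm{\bm{q}_i}$. Your aside confirming $d_i=1+\bm{c}_i^{T}B\bm{e}_i=1$ matches how the lemma is used in the Sherman--Morrison step.
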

The proof of the above lemma can be found below after the proof of the current lemma.
Next, $S_{i}$ is given by:
\begin{equation}
\label{eqn:S_unsimplified}
\begin{aligned}
    S_{i} &= B_{i}D_{i}B_{i}^{T}\\
    &=B_{i}(D-\delta_{i}\bm{e}_{i}\bm{e}_{i}^{T})B_{i}^{T}\\
    &=\underbrace{B_{i}DB_{i}^{T}}_\text{term 1} - \delta_{i} \underbrace{B_{i}\bm{e}_{i}\bm{e}_{i}^{T}B_{i}^{T}}_\text{term 2}.
\end{aligned}
\end{equation}
If the intervened node $V_{i}$ is one of \emph{\Root} 
nodes of the underlying unknown causal graph, then $\bm{c}_{i}=\bm{0}\implies \bm{q}_{i}=B^{T}\bm{c}_{i}=\bm{0} \implies B_{i}=B$ (from \refeqn{eqn:intervened_B}), and thus:
\begin{equation}
\begin{aligned}
    B_{i}DB_{i}^{T} = BDB^{T} = S,
\end{aligned}
\end{equation}
where $S$ is the covariance matrix of the observational distribution. Otherwise, for non-root nodes, \emph{term 1} in the above equation is:
\begin{equation}
\label{eqn:S_term1_non_root}
\begin{aligned}
    &B_{i}DB_{i} = (B-\bm{r}_{i}\bm{q}_{i}^{T})D(B-\bm{r}_{i}\bm{q}_{i}^{T})^{T}\\
    &=BDB^{T}-BD\bm{q}_{i}\bm{r}_{i}^{T} - (BD\bm{q}_{i}\bm{r}_{i}^{T})^{T} + \bm{r}_{i}\bm{q}_{i}^{T}D\bm{q}_{i}\bm{r}_{i}^{T}\\
    &=S - \bm{w}_{i}\bm{r}_{i}^{T}-\bm{r}_{i}\bm{w}_{i}^{T} + \eta_{i}\bm{r}_{i}\bm{r}_{i}^{T}\\
\end{aligned}
\end{equation}
where $\bm{w}_{i}=BD\bm{q}_{i}$ and scalar $\eta_{i}=\bm{q}_{i}^{T}D\bm{q}_{i}$.
Next simplifying \emph{term 2} in \refeqn{eqn:S_unsimplified} we get:
\begin{equation}
\begin{aligned}
    B_{i}\bm{e}_{i}(B_{i}\bm{e_{i}})^{T} &= \Big{(}B\bm{e}_{i}-\bm{r}_{i}\bm{q}_{i}^{T}\bm{e}_{i}\Big{)}\Big{(}B\bm{e}_{i}-\bm{r}_{i}\bm{q}_{i}^{T}\bm{e}_{i}\Big{)}^{T}\\
    &=B\bm{e}_{i}\bm{e}_{i}^{T}B^{T}=\bm{r}_{i}\bm{r}_{i}^{T}\\ 
\end{aligned}
\end{equation}
since $\bm{q}_{i}^{T}\bm{e}_{i} = \bm{c}_{i}^{T}B\bm{e}_{i}=0$ from \reflemma{lemma:non_zero_c} and $\bm{r}_{i}\triangleq B\bm{e}_{i}$ (from \refeqn{eqn:intervened_B}). Thus the covariance matrix of the endogenous variables in the intervened distribution is given by:
\begin{equation}
\label{eqn:all_S_i_expression}
    S_{i} = \begin{cases}
        S-\delta_{i}\bm{r}_{i}\bm{r}_{i}^{T}, & \text{for \Root node} \\
        B_{i}DB_{i}^{T} - \delta_{i}\bm{r}_{i}\bm{r}_{i}^{T}, & \text{otherwise}\\ 
        \end{cases}
\end{equation}
thereby completing our proof. 
\end{proof}

\begin{proof}[Proof of \reflemma{lemma:non_zero_c}]
Showing $\bm{q}_{i}\neq \bm{0}$: We are given that $B$ is a lower triangular matrix with non-zero diagonal entries. Also, $\bm{c}_{i}=\bm{a}_{i}-\bm{a}_{i}^{'}\neq0$ and $[\bm{c}_{i}]_{t}=0, \forall t\geq i \implies \exists t\> s.t.\> [\bm{c}_{i}]_{t}\neq 0$ and let $t^{*}$ be that last index where $\bm{c}_{i}$ is non-zero. Now lets look at the $[B^{T}\bm{c}_{i}]_{t^{*}}=\bm{c}_{t^{*}}\cdot [B^{T}]_{t^{*},t^{*}}\neq0$ since $B^{T}$ is a upper triangular matrix and $[\bm{c}_{i}]_{t^{*}}\neq 0$. 

Showing $\bm{q}_{i}^{T}\bm{e}_{i}=0$:  $B\bm{e}_{i}$ is the $i^{th}$ column of the lower triangular matrix $B$ $\implies [B\bm{e}_{i}]_{t}=0, \forall t<i \implies \bm{c}_{i}^{T}B\bm{e}_i=0$ since $[\bm{c}_{i}]_{t}=0, \forall t\geq i$.
\end{proof}

\subsection{Proof of \reflemma{lemma:mean_sep}}
\label{app:lemma_mean_sep}
\begin{proof}
Similar to the proof of \reflemma{lemma:cov_sep}, without loss of generality, we will assume that the endogenous variables are topologically ordered based on the underlying causal graph such that the adjacency matrix $A$ is lower triangular. The permutation matrix will only permute the rows of the mean vectors $\bm{m}_i$ and $\bm{m}_j$; hence, there will be no effect on the value of $\norm{\bm{m}_i-\bm{m}_j}_{F}^{2}$.

Now, let the mean of the Gaussian distribution corresponding to intervention on node $V_i$ be given by (see Atomic Intervention paragraph in \refsec{sec:prelim} for definition):
\begin{equation}
\begin{aligned}
    \bm{m}_i = B_{i}\bm{\mu}_i &= (B-B\bm{e}_i\bm{c}_i^{T}B)(\bm{\mu}+\gamma_i\bm{e}_i)\\
    \implies B^{-1}\bm{m}_i &= (I-\bm{e}_i\bm{c}_i^{T}B)(\bm{\mu}+\gamma_i\bm{e}_i),
\end{aligned}
\end{equation}
where $B_i=(I-A_i)^{-1}=B-B\bm{e}_i\bm{c}_i^{T}B$ (from  \reflemma{lemma:cov_matrix_update}), $\bm{\mu}_i$ is the new mean vector for the noise distribution, $\mu$ is the mean vector of the observational noise distribution and $\gamma_i\bm{e}_i$ is the update to the mean of the noise distribution when intervening on node $V_i$. 
Similarly, the mean corresponding to the intervened distribution on node $V_j$ is given by:
\begin{equation}
\begin{aligned}
    \bm{m}_j = B_{j}\bm{\mu}_j &= (B-B\bm{e}_j\bm{c}_j^{T}B)(\bm{\mu}+\gamma_j\bm{e}_j)\\
    \implies B^{-1}\bm{m}_j &= (I-\bm{e}_j\bm{c}_j^{T}B)(\bm{\mu}+\gamma_j\bm{e}_j).
\end{aligned}
\end{equation}
Looking at the $i^{\text{th}}$ entry of the vector $B^{-1}\bm{m}_i$, we get:
\begin{equation}
\label{eq:mi_ith_entry}
\begin{aligned}
    \bm{e}_{i}^{T}B^{-1}\bm{m}_i = \bm{e}_{i}^{T}\bm{\mu} - \bm{c}_i^{T}B\bm{\mu} +\gamma_{i} -\cancelto{0}{\bm{c}_{i}^{T}B\bm{e}_i},
\end{aligned}
\end{equation}
and the $i^{\text{th}}$ entry of the vector $B^{-1}\bm{m}_j$ is given by:
\begin{equation}
\begin{aligned}
    \bm{e}_{i}^{T}B^{-1}\bm{m}_j = \bm{e}_{i}^{T}\bm{\mu}.
\end{aligned}
\end{equation}
Similarly, the $j^{\text{th}}$ entry of the vector $B^{-1}\bm{m}_i$ is given by:
\begin{equation}
\label{eq:mi_jth_entry}
\begin{aligned}
    \bm{e}_{j}^{T}B^{-1}\bm{m}_i = \bm{e}_{j}^{T}\bm{\mu},
\end{aligned}
\end{equation}
and  the $i^{\text{th}}$ entry of the vector $B^{-1}\bm{m}_j$ is given by:
\begin{equation}
\begin{aligned}
    \bm{e}_{j}^{T}B^{-1}\bm{m}_j = \bm{e}_{j}^{T}\bm{\mu} - \bm{c}_j^{T}B\bm{\mu} +\gamma_{j} -\cancelto{0}{\bm{c}_{j}^{T}B\bm{e}_j}.
\end{aligned}
\end{equation}
Thus, the difference in the mean of both distributions can be lower bounded by:
\begin{equation}
\begin{aligned}
    \norm{B^{-1}(\bm{m}_i-\bm{m}_j)}^{2}_{F} &\geq (\gamma_i - \bm{c}_{i}^{T}B\bm{\mu})^{2} + (\gamma_j - \bm{c}_j^{T}B\bm{\mu})^{2}.
\end{aligned}
\end{equation}
Now based on the value of $|\gamma|_i$ and $|\bm{c}_{i}^{T}B\bm{\mu}|$, we have:
\begin{equation}
\label{eqn:mu_i_diff}
(\gamma_i - \bm{c}_{i}^{T}B\bm{\mu})^{2} \geq 
    \begin{cases}
        \frac{\gamma_i^{2}}{4}, & \psi_{i}^{+} \triangleq |\bm{c}_{i}^{T}B\bm{\mu}|< \frac{|\gamma_i|}{2} \text{ or }   |\bm{c}_{i}^{T}B\bm{\mu}|> \frac{3|\gamma_i|}{2}\\
        0, & \psi_{i}^{-} \triangleq  \frac{|\gamma_i|}{2} \leq |\bm{c}_{i}^{T}B\bm{\mu}| \leq \frac{3|\gamma_i|}{2},
    \end{cases}
\end{equation}
where $\psi_{i}^{+}$ and $\psi_{i}^{-}$ are used to denote two different mutually exclusive different events for ease of exposition later. In the case when $\psi_{i}^{-}$ is true, we have:
\begin{equation}
\label{eqn:gamma_ci_relation}
\begin{aligned}
     \frac{\gamma_i^{2}}{4} &\leq (\bm{c}_{i}^{T}B\bm{\mu})^{2}\leq \norm{\bm{c}_{i}}^{2} \norm{B\bm{\mu}}^{2} \quad \quad\quad \text{ (Cauchy-Schwarz)}\\
    \frac{\gamma_i^{2}}{4\norm{B\bm{\mu}}^{2}} &\leq \norm{\bm{c}_i}^{2}.
\end{aligned}
\end{equation}
Similarly, we have:
\begin{equation}
\label{eqn:mu_j_diff}
(\gamma_j - \bm{c}_{j}^{T}B\bm{\mu})^{2} \geq 
    \begin{cases}
        \frac{\gamma_j^{2}}{4}, & \psi_{j}^{+} \triangleq |\bm{c}_{j}^{T}B\bm{\mu}|< \frac{|\gamma_j|}{2} \text{ or }   |\bm{c}_{j}^{T}B\bm{\mu}|> \frac{3|\gamma_j|}{2}\\
        0, & \psi_{j}^{-} \triangleq  \frac{|\gamma_j|}{2} \leq |\bm{c}_{j}^{T}B\bm{\mu}| \leq \frac{3|\gamma_j|}{2},
    \end{cases}
\end{equation}
and in the event when $\psi_{j}^{-}$ is true, we have:
\begin{equation}
\label{eqn:gamma_cj_relation}
\begin{aligned}
     \frac{\gamma_j^{2}}{4} &\leq (\bm{c}_{j}^{T}B\bm{\mu})^{2}\leq \norm{\bm{c}_{j}}^{2} \norm{B\bm{\mu}}^{2} \quad \quad\quad \text{ (Cauchy-Schwarz)}\\
    \frac{\gamma_j^{2}}{4\norm{B\bm{\mu}}^{2}} &\leq \norm{\bm{c}_j}^{2}.
\end{aligned}
\end{equation}
Combining \refeqn{eqn:mu_i_diff} and \ref{eqn:mu_j_diff} and using $\norm{B^{-1}(\bm{m}_i-\bm{m}_j)}_{F}\leq \norm{B^{-1}}_{F}\norm{\bm{m}_i-\bm{m}_j}_{F}$, we have:
\begin{equation}
\label{eqn:master_mi_mj}
\norm{\bm{m}_i-\bm{m}_j}_{F}^{2} \geq
\begin{cases}
    \frac{\gamma_i^{2}}{4\norm{B^{-1}}^{2}_{F}} + \frac{\gamma_{j}^{2}}{4\norm{B^{-1}}^{2}_{F}}, & \psi_{i}^{+},\psi_{j}^{+} \text{ are active}\\
    \frac{\gamma_i^{2}}{4\norm{B^{-1}}^{2}_{F}}, & \psi_{i}^{+},\psi_{j}^{-} \text{ are active,  \refeqn{eqn:gamma_cj_relation} holds}\\
   \frac{\gamma_{j}^{2}}{4\norm{B^{-1}}^{2}_{F}}, & \psi_{i}^{-},\psi_{j}^{+} \text{ are active,  \refeqn{eqn:gamma_ci_relation} holds}\\
   0, & \psi_{i}^{-},\psi_{j}^{-} \text{ are active,  \refeqn{eqn:gamma_ci_relation} and \ref{eqn:gamma_cj_relation} holds}.
\end{cases}
\end{equation}

For the case when one of the mean say $\bm{m}_j$ is observational then:
\begin{equation}
\begin{aligned}
    \bm{m}_j&=\bm{m}=B\bm{\mu}\\
    \implies B^{-1}\bm{m}_j &= \bm{\mu}.
\end{aligned}
\end{equation}
Thus the $i^{\text{th}}$ and $j^{\text{th}}$ entry of $\bm{m}_j$ is $\bm{e}_i^{T}\bm{\mu}$ and $\bm{e}_j^{T}\bm{\mu}$, respectively. Thus using \refeqn{eq:mi_ith_entry} and \ref{eq:mi_jth_entry} we get:
\begin{equation}
     \norm{B^{-1}(\bm{m}_i-\bm{m}_j)}^{2}_{F} \geq (\gamma_i - \bm{c}_{i}^{T}B\bm{\mu})^{2}.
\end{equation}
Then again, following a similar analysis as in \refeqn{eqn:mu_i_diff} and \ref{eqn:gamma_ci_relation} we have the following cases:
\begin{equation}
\label{eqn:master_mi_mj_obs}
\norm{\bm{m}_i-\bm{m}_j}_{F}^{2} \geq
\begin{cases}
    \frac{\gamma_i^{2}}{4\norm{B^{-1}}^{2}_{F}}, & \psi_{i}^{+} \text{is active}\\
   0, & \psi_{i}^{-}\text{ is active,  \refeqn{eqn:gamma_ci_relation} holds},
\end{cases}
\end{equation}
which is equivalent to say that only case $2$ and $4$ are applicable in \refeqn{eqn:master_mi_mj}. This completes the proof.
\end{proof}

\section{Setup and Additional Empirical Result}
\label{app:add_emp_res}
\begin{figure}[!ht]
\centering
    \begin{subfigure}[h]{.45\textwidth}
        \captionsetup{justification=centering}
        \centering
            \includegraphics[width=\linewidth,height=0.8\linewidth]{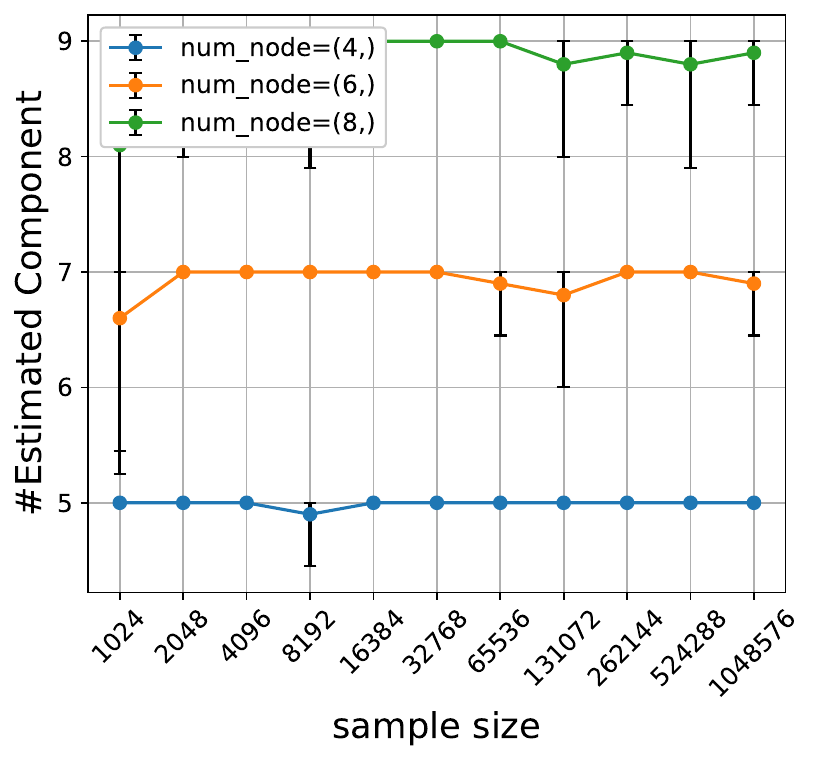}
    		\caption[]%
            {{\small num interventions = all}}    
    		\label{fig:num_est_comp_all}
    \end{subfigure}
\hfill 
  \begin{subfigure}[h]{.45\textwidth}
  \captionsetup{justification=centering}
    \centering
    \includegraphics[width=\linewidth,height=0.8\linewidth]{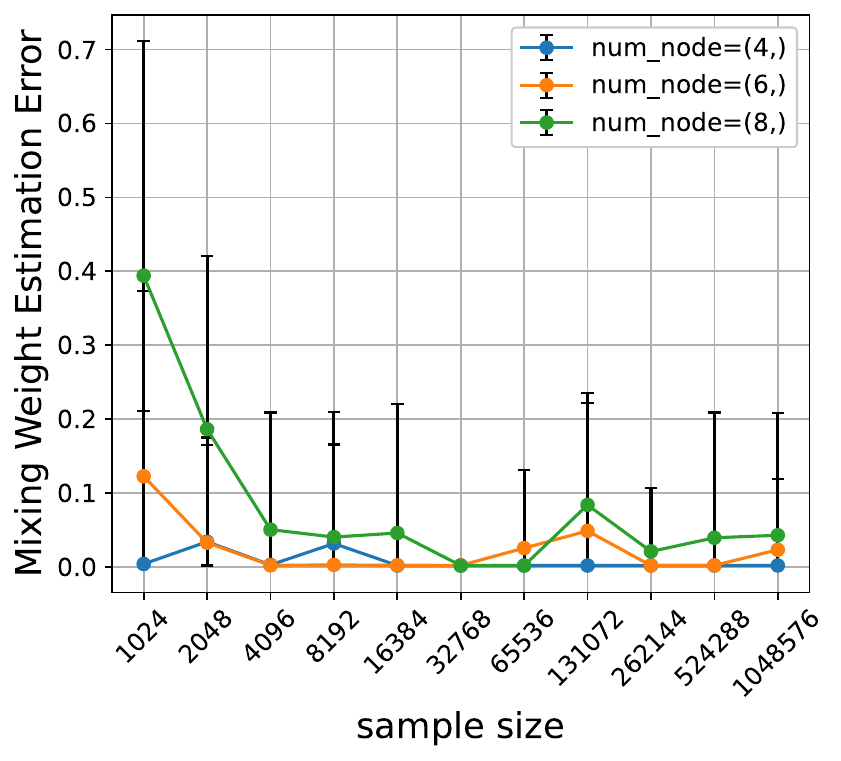}
		\caption[]%
        {{\small num interventions = all}}    
		\label{fig:mixing_weight_err_all} 
  \end{subfigure}

\medskip

 \begin{subfigure}[h]{.45\textwidth}
        \captionsetup{justification=centering}
        \centering
            \includegraphics[width=\linewidth,height=0.8\linewidth]{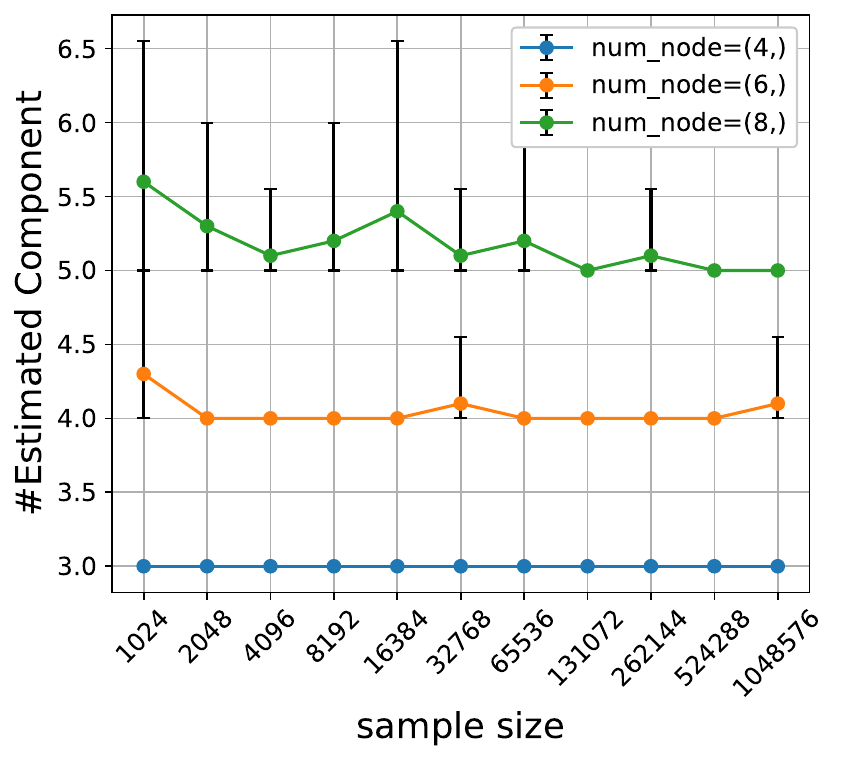}
    		\caption[]%
            {{\small num interventions = half}}    
    		\label{fig:num_est_comp_half}
    \end{subfigure}
\hfill 
  \begin{subfigure}[h]{.45\textwidth}
  \captionsetup{justification=centering}
    \centering
    \includegraphics[width=\linewidth,height=0.8\linewidth]{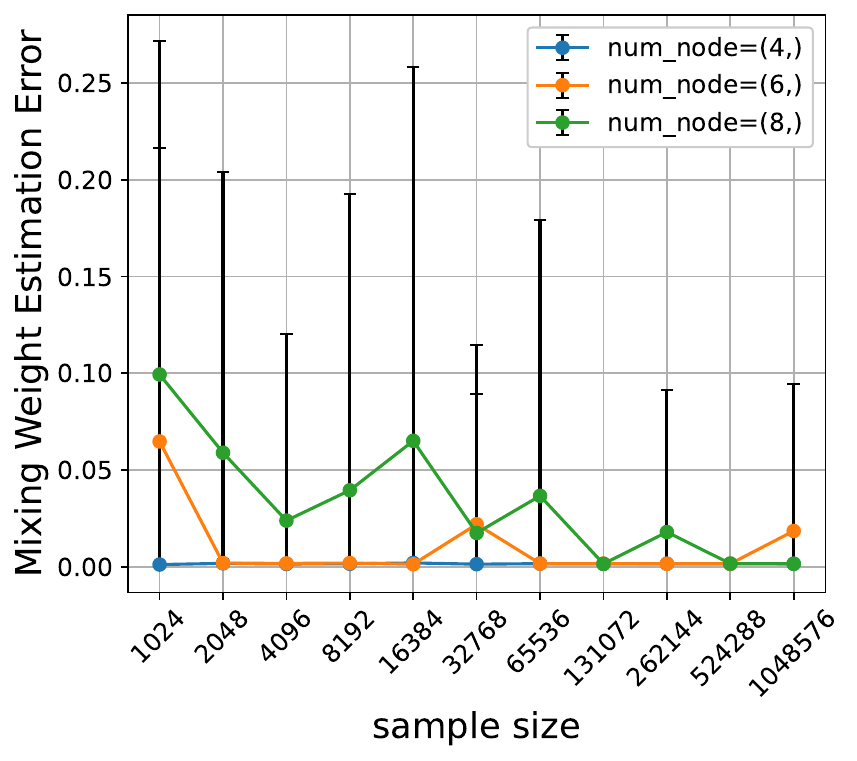}
		\caption[]%
        {{\small num interventions = half}}    
		\label{fig:mixing_weight_err_half} 
  \end{subfigure}

\caption{\new{\textbf{Other Evaluations Metrics for the simulation  experiments in \reffig{fig:node_var_all3}}}: The top row denotes the corresponding metrics for \emph{all} interventions in the mixture setting and the bottom row to the \emph{half} setting. The first column shows the number of components estimated by our algorithm \ouralgo. For the \emph{all} setting, the actual number of components corresponding to the system with nodes 4,6 and 8 are 5,7,9 respectively (one intervention on each node + one observational distribution). We observe that \ouralgo is able to correctly estimate the number of components even with a small number of samples. Similarly, for \emph{half} setting,  the actual number of components corresponding to the system with nodes 4,6 and 8 are 3,4,5 respectively (intervention on half of node and one observational distribution). Even for this case \ouralgo is able to correctly estimate the number of components. The second column shows the error in the estimation of the mixing coefficient ($\pi_i$'s, see \refdef{def:mixture_of_intervention}). For both cases, we observe that the error in the estimation of the mixing coefficient goes to zero as the sample size increases.
}
\label{fig:main_plot_other_metric}
\end{figure}

\subsection{Experimental Setup Discussion - Simulation}
\label{app:subsec:expt_setup}
\paragraph{Random Graph Generation:}
As mentioned in the Simulation setup in \refsec{sec:empirical_results}, we randomly generate the adjacency matrix $A$ of the causal graphs used to simulate the mixture distribution. All the weights in the adjacency matrix are sampled in the range $[-1,-0.5]\cup [0.5,1]$ bounded away from zero. This gives us a complete graph of all nodes. Thus, to introduce sparsity in the graph, we only keep an edge with probability $\zeta$ by setting the corresponding value in the adjacency matrix to zero if the edge is removed. Unless otherwise specified, we set $\zeta=0.8$ for all our experiments.  

\paragraph{Step 1 of \refalgo{algo:mixture_utigsp}:}
We use the GaussianMixture class from the scikit-learn Python package to disentangle the components of the mixture. For all our experiments, we use the default $tol=10^{-3}$ used by GaussianMixture to decide on convergence of the underlying EM algorithm. 

\paragraph{Step 2 of \refalgo{algo:mixture_utigsp}:}
We run the UT-IGSP algorithm in Step 2 of our \refalgo{algo:mixture_utigsp} with the standard parameter mentioned in the documentation. Specifically, we use $\alpha=10^{-3}$ for both \emph{MemoizedCITester} and \emph{MemoizedInvarianceTester} functions used by UT-IGSP. 

\paragraph{Specific to results in Appendix} Unless otherwise specified, for all the results in the appendix we run all the experiments for 5 random settings. We ignore the error bars in the appendix for clarity in exposition. Also for all the experiments, the half-intervention setting i.e. where the mixture contains intervention on half of the randomly selected nodes is considered.

\subsection{Experimental Setup Discussion - Biology Dataset}
\label{app:subsec:expt_setup_bio}
\paragraph{Dataset:} In the interventional data the signaling protein is also perturbed along with the receptor enzymes. The different perturbed signaling proteins (along with the number of samples corresponding) are: Akt (911), PKC (723), PIP2 (810), Mek (799), PIP3 (848). The observational data contained 1755 samples so overall 5846 samples were used for our experiments. For details see \citet{igsp2017} and \citet{sachs_protein_signalling_dataset}.

\subsection{Code}
The source code to all the experiments can be found in the following GitHub repository: \href{https://github.com/BigBang0072/mixture_mec}{https://github.com/BigBang0072/mixture\_mec}

\subsection{Computational Resources}
\label{app:subsec:comp_resources}
We use an internal cluster of CPUs to run all our experiments. We run 10 random runs for each of the experimental configurations and report the mean (as points) and $5^{th}$ and $95^{th}$ quantiles as error bars for all our experiments in the main paper, and we report only mean the mean for the experiments in the appendix to declutter the figures. 

\subsection{Evaluation Metric}
\label{app:subsec:eval_metric}
\paragraph{Parameter Estimation Error}
Given the mixture distribution, the first step of our \refalgo{algo:mixture_utigsp} estimates the parameters of every interventional and observational distribution present in the mixture (mixing weight $\pi_i$, mean $\bm{m}_i$ and covariance matrix $S_i$). 
\refalgo{algo:mixture_utigsp} returns the maximum number of possible components, i.e. $k=n+1$, by default. For all our experiments, we used this default value and left searching over the number of components for future work.
To calculate the parameter estimation error, we first find the best match of the estimated parameters with the ground truth parameters based on the minimum error between the mean and covariance matrix. More specifically, let $k^{*}$ be the ground truth number of components in the mixture. Then, we iterate over all possible $k^{*}$ sized subsets of the estimated parameters and choose the one with the smallest absolute error sum between the mean and covariance matrix. Formally:
\begin{equation}
    \text{Parameter Estimation Error} \triangleq \operatorname{min}\Bigg{\{} \sum_{i=1}^{k^{*}}\Big{(}|\bm{m}_i-
    \hat{\bm{m}}_{\rho(i)}| + |S_i-\hat{S}_{\rho(i)}|\Big{)} : \rho \in perm([n+1])\Bigg{\}}
\end{equation}
where $perm([n+1])$ represents all possible permutations of the indices $[0,1,\ldots,n]$ and $\rho^{*}$ is the corresponding permutation with the minimum error.

\begin{figure}[t]
\centering
            \includegraphics[width=\linewidth,height=0.32\linewidth]{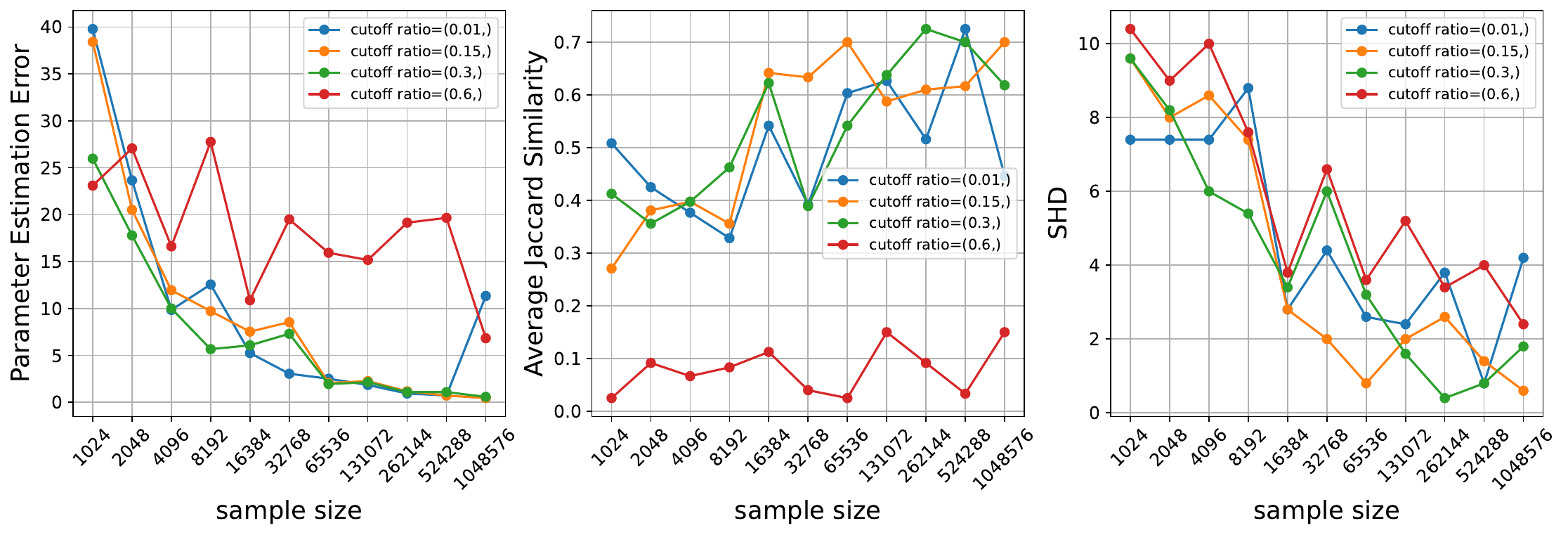}
\caption{\textbf{Performance of \refalgo{algo:mixture_utigsp} as we change the cutoff ratio used for automatic component selection}:
We consider graphs with 6 nodes in this experiment with \emph{half} intervention setting.
In step 2 of \ouralgo, we select the number of components using the log-likelihood curve. We scan the curve starting from the mixture model with the largest number of components to the smallest and stop where the relative change in the likelihood increases above a cutoff ratio (to select the elbow point of the curve). The cutoff ratio in the algorithm is chosen to be an arbitrary number close to zero. Here we compare the performance of \ouralgo on all three metrics for the half setting of \reffig{fig:node_var_all3} as we vary the cutoff ratio. We observe that for the cutoff ratio close to zero i.e. 0.01, 0.15,0.3 the performance remains similar showing that the model selection criteria are robust to the selected cutoff ratio. The number of nodes
}
\label{fig:cutoff_ratio}
\end{figure}

\paragraph{Average Jaccard Similarity (JS)}
In step 2 of our \refalgo{algo:mixture_utigsp}, UT-IGSP estimates the unknown interventional targets for each of the individual components disentangled in Step 1. We use the same matching ($\rho^{*}$) found in the parameter estimation error step (as mentioned above) to calculate the Jaccard similarity between the estimated and ground truth intervention target for that component. Formally:
\begin{equation}
    \text{Avg. Jaccard Similarity} \triangleq \frac{1}{k^{*}}\sum_{i=1}^{k^{*}} JS(\bm{t}_i,\hat{\bm{t}}_{\rho(i)}) =  \frac{1}{k^{*}}\sum_{i=1}^{k^{*}} \frac{|\bm{t}_i \cap\hat{\bm{t}}_{\rho^{*}(i)}|}{|\bm{t}_i \cup \hat{\bm{t}}_{\rho^{*}(i)}|},
\end{equation}
where $\bm{t}_i$ is the ground truth intervention target set and $\hat{\bm{t}}_{j}$ is the estimated target set. For the case when both $\bm{t}_i=\hat{\bm{t}}_i=\phi$, then $JS(\bm{t}_i,\hat{\bm{t}}_i)\triangleq 1$.

\begin{figure}[t]
\centering
            \includegraphics[width=\linewidth,height=0.32\linewidth]{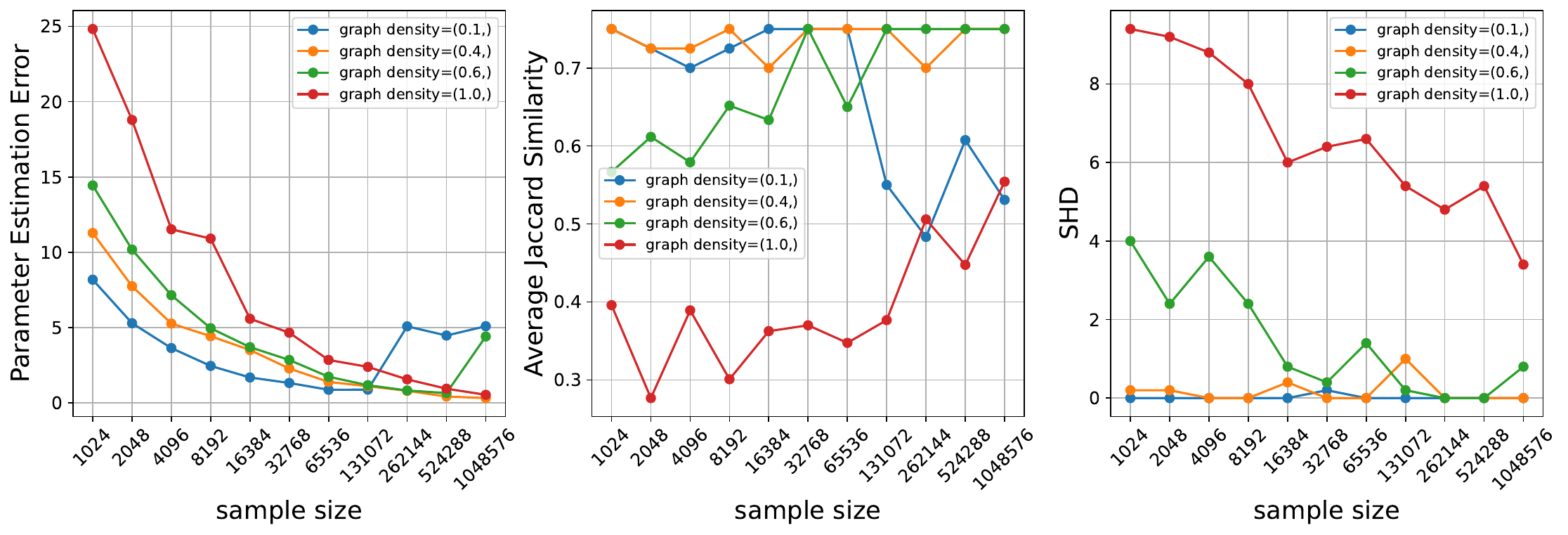}
\caption{\textbf{Performance of \refalgo{algo:mixture_utigsp} as we change the density of the underlying true causal graph}: The mixture data contains atomic interventions on all nodes as well as observational data (\emph{half} setting as described in the results in \refsec{sec:empirical_results}). The column shows different evaluation metrics, i.e., Parameter Estimation Error, Average Jaccard Similarity, and SHD (see Evaluation metric paragraph in \refsec{sec:empirical_results}).
In this experiment, we vary the density of the underlying causal graph by keeping the edges in a fully connected graph with a fixed probability, labeled as density in the legend of the above plots (see random graph generation paragraph in \refappendix{app:subsec:expt_setup} for details). 
The maximum possible density is 1, i.e., the probability of keeping an edge is 1, corresponding to a fully connected graph, and the lowest possible density is 0. We observe that as the density of the graph increases, we require more samples to achieve similar performance to less dense graphs on all three metrics. Our \reftheorem{theorem:main_identifiability} shows that the sample complexity required for estimating the parameters of the mixture is proportional to the norm of the adjacency matrix $\norm{A}$ and as the density of the graph increases $\norm{A}$ increases. Thus, as the density increases, we require more samples to achieve a similar performance in estimating the parameters of the mixture, as seen in the parameter estimation error plot above. 
}
\label{fig:variation_sparsity}
\end{figure}

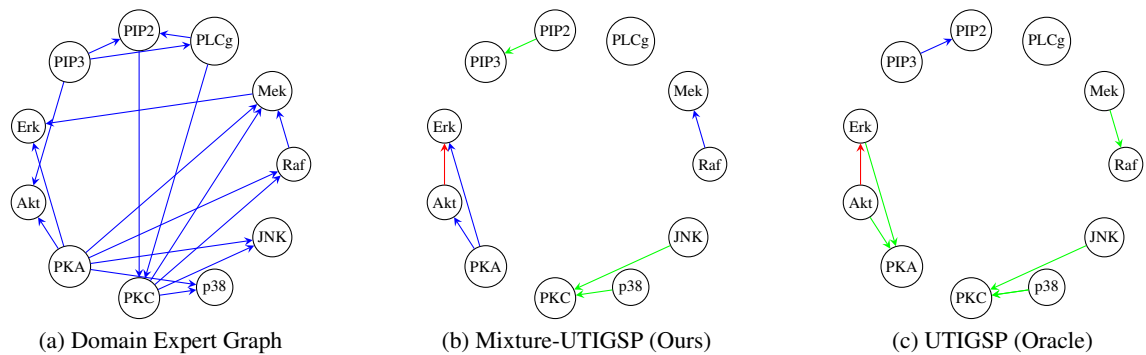
\begin{figure}[htbp]
    \centering

    \begin{subfigure}{0.33\textwidth}
        \centering
        \begin{tikzpicture}[>=stealth, node distance=2cm, scale=0.6, transform shape]
            \tikzset{VertexStyle/.append style={draw, circle, minimum size=1.2em, inner sep=2pt}}

            \node[VertexStyle] (0) at (0:3cm) {Raf};
            \node[VertexStyle] (1) at (32.73:3cm) {Mek};
            \node[VertexStyle] (2) at (65.46:3cm) {PLCg};
            \node[VertexStyle] (3) at (98.19:3cm) {PIP2};
            \node[VertexStyle] (4) at (130.92:3cm) {PIP3};
            \node[VertexStyle] (5) at (163.65:3cm) {Erk};
            \node[VertexStyle] (6) at (196.38:3cm) {Akt};
            \node[VertexStyle] (7) at (229.11:3cm) {PKA};
            \node[VertexStyle] (8) at (261.84:3cm) {PKC};
            \node[VertexStyle] (9) at (294.57:3cm) {p38};
            \node[VertexStyle] (10) at (327.30:3cm) {JNK};

                \draw[->,blue] (0) -- (1);
                \draw[->,blue] (1) -- (5);
                \draw[->,blue] (2) -- (3);
                \draw[->,blue] (2) -- (8);
                \draw[->,blue] (3) -- (8);
                \draw[->,blue] (4) -- (2);
                \draw[->,blue] (4) -- (3);
                \draw[->,blue] (4) -- (6);
                \draw[->,blue] (7) -- (0);
                \draw[->,blue] (7) -- (1);
                \draw[->,blue] (7) -- (5);
                \draw[->,blue] (7) -- (6);
                \draw[->,blue] (7) -- (9);
                \draw[->,blue] (7) -- (10);
                \draw[->,blue] (8) -- (0);
                \draw[->,blue] (8) -- (1);
                \draw[->,blue] (8) -- (9);
                \draw[->,blue] (8) -- (10);
        \end{tikzpicture}
        \caption{Domain Expert Graph}
        \label{fig:ground-graph}
    \end{subfigure}%
    \hfill
    \begin{subfigure}{0.33\textwidth}
        \centering
        \begin{tikzpicture}[>=stealth, node distance=2cm, scale=0.6, transform shape]
            \tikzset{VertexStyle/.append style={draw, circle, minimum size=1.2em, inner sep=2pt}}

            \node[VertexStyle] (0) at (0:3cm) {Raf};
            \node[VertexStyle] (1) at (32.73:3cm) {Mek};
            \node[VertexStyle] (2) at (65.46:3cm) {PLCg};
            \node[VertexStyle] (3) at (98.19:3cm) {PIP2};
            \node[VertexStyle] (4) at (130.92:3cm) {PIP3};
            \node[VertexStyle] (5) at (163.65:3cm) {Erk};
            \node[VertexStyle] (6) at (196.38:3cm) {Akt};
            \node[VertexStyle] (7) at (229.11:3cm) {PKA};
            \node[VertexStyle] (8) at (261.84:3cm) {PKC};
            \node[VertexStyle] (9) at (294.57:3cm) {p38};
            \node[VertexStyle] (10) at (327.30:3cm) {JNK};

            \draw[->, color=blue] (0) -- (1); 
            \draw[->, color=green] (3) -- (4); 
            \draw[->, color=red] (6) -- (5); 
            \draw[->, color=blue] (7) -- (5); 
            \draw[->, color=blue] (7) -- (6); 
            \draw[->, color=green] (9) -- (8); 
            \draw[->, color=green] (10) -- (8); 

        \end{tikzpicture}
        \caption{Mixture-UTIGSP (Ours)}
        \label{fig:estimated-graph-ours}
    \end{subfigure}%
    \hfill
    \begin{subfigure}{0.33\textwidth}
        \centering
        \begin{tikzpicture}[>=stealth, node distance=2cm, scale=0.6, transform shape]
            \tikzset{VertexStyle/.append style={draw, circle, minimum size=1.2em, inner sep=2pt}}

            \node[VertexStyle] (0) at (0:3cm) {Raf};
            \node[VertexStyle] (1) at (32.73:3cm) {Mek};
            \node[VertexStyle] (2) at (65.46:3cm) {PLCg};
            \node[VertexStyle] (3) at (98.19:3cm) {PIP2};
            \node[VertexStyle] (4) at (130.92:3cm) {PIP3};
            \node[VertexStyle] (5) at (163.65:3cm) {Erk};
            \node[VertexStyle] (6) at (196.38:3cm) {Akt};
            \node[VertexStyle] (7) at (229.11:3cm) {PKA};
            \node[VertexStyle] (8) at (261.84:3cm) {PKC};
            \node[VertexStyle] (9) at (294.57:3cm) {p38};
            \node[VertexStyle] (10) at (327.30:3cm) {JNK};

            \draw[->, color=green] (1) -- (0); 
            \draw[->, color=blue] (4) -- (3); 
            \draw[->, color=red] (6) -- (5); 
            \draw[->, color=green] (5) -- (7); 
            \draw[->, color=green] (6) -- (7); 
            \draw[->, color=green] (9) -- (8); 
            \draw[->, color=green] (10) -- (8); 
            \draw[->, color=green] (9) -- (8); 
            
        \end{tikzpicture}
        \caption{UTIGSP (Oracle)}
        \label{fig:estimated-graph-oracle}
    \end{subfigure}
\caption{
\textbf{Ground truth and estimated causal graph for Protein signaling dataset \cite{sachs_protein_signalling_dataset}}:
Fig \ref{fig:ground-graph} is the graph created with the help of domain experts for this problem \cite{igsp2017}. 
\ref{fig:estimated-graph-ours} shows the graph estimated by our \ouralgo and \ref{fig:estimated-graph-oracle} is the graph estimated by oracle UT-IGSP when they are given the ground truth disentangled mixture.
The blue colored arrow in 1b and 1c shows the correctly recovered edges in the domain expert graph. Green shows the edges with the same skeleton in the domain expert graph but in a reversed direction. The red shows the edges that are incorrectly added in the estimated graph. We observe that \ouralgo correctly identifies two more edges (PKA->ERK and PKA-> Akt) as compared to an oracle which could be due to randomness in the UTIGSP algorithm. For this estimation, the best-performing cutoff of 0.01 was selected (see \reftbl{tbl:sachs_combined}).
}
\label{fig:sachs_graph_viz}
\end{figure}

\begin{figure}[t]
  \begin{subfigure}[h]{\textwidth}
    \centering
    \includegraphics[width=\linewidth,height=0.32\linewidth]{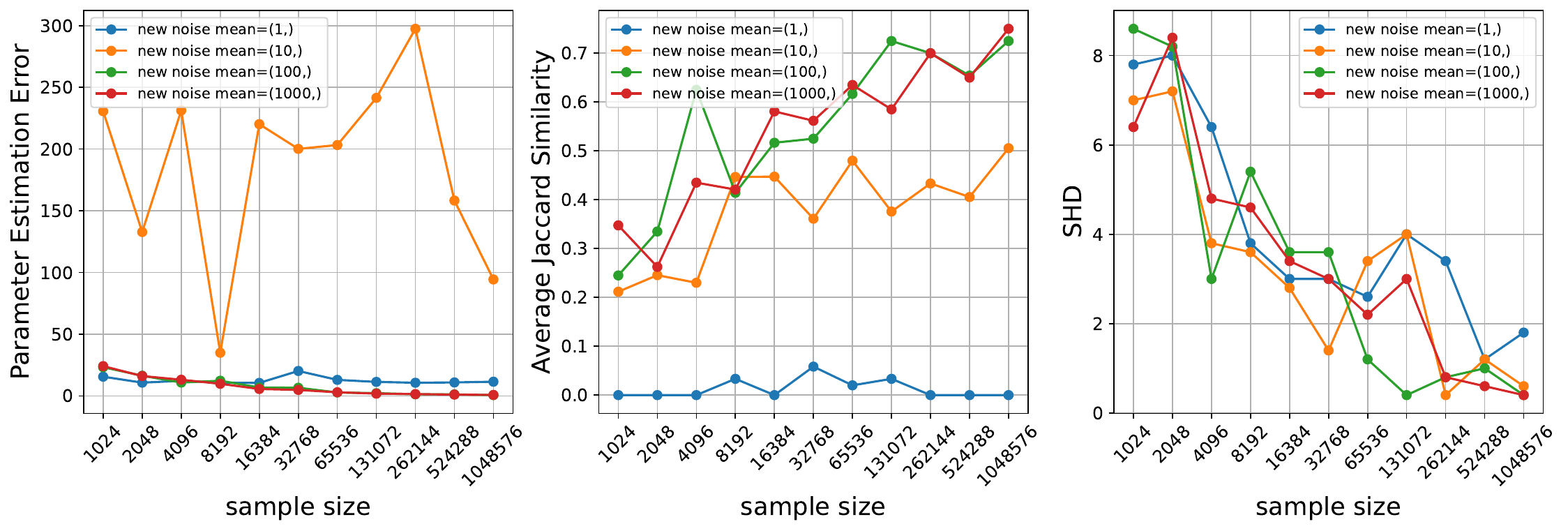}
		\caption[]%
        {Varying the mean of the noise distribution ($|\gamma_i|$): The initial mean of the noise distribution is $0.0$ for all the nodes. Upon intervening on a node to generate the interventional distribution, we change the mean of the noise to a different value. The variance of the noise distribution of the intervened node is kept the same as the initial distribution i.e. 1.0.
        From \reftheorem{theorem:main_identifiability}, we expect that as the new mean increases further from the initial mean = $0.0$, the parameter estimation error should be lower for a given sample size and lead to better performance in intervention target estimation and causal discovery. As expected, the setting with the smallest change in the mean of the noise distribution (blue curve) has the worst performance. The case when the new noise mean is 10.0 (orange curve) is unusual where we see an unexpected increase in parameter estimation error.}    
		\label{fig:mean_shift_var} 
  \end{subfigure}
\medskip
  \begin{subfigure}[h]{\textwidth}
    \centering
    \includegraphics[width=\linewidth,height=0.32\linewidth]{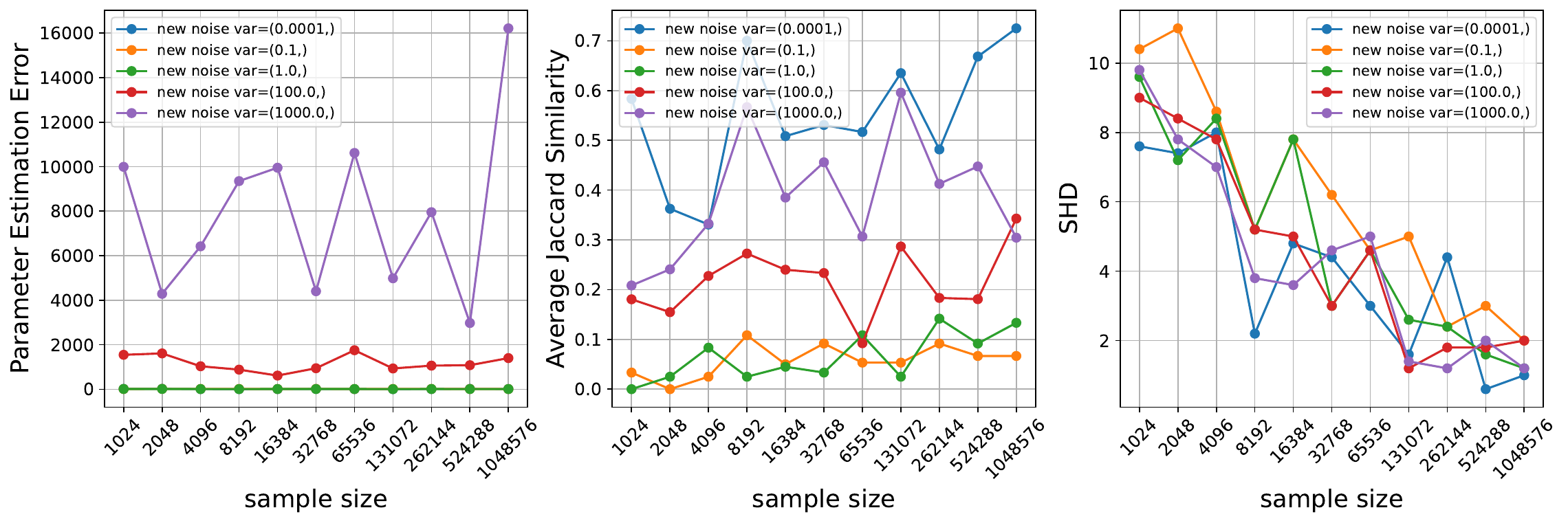}
		\caption[]%
        {Varying the variance of the noise distribution ($|\delta_i|$): The initial variance of the noise distribution is $1.0$ for all nodes, and we change the variance of the new noise distribution upon intervention in this experiment. The mean of the noise distribution of the intervened distribution is kept the same as the initial distribution i.e. 0.0.
        From \reftheorem{theorem:main_identifiability}, we see that sample complexity to recover the parameters of the mixture distribution is inversely proportional to the change in the noise variance $|\delta_i|$. Thus, we expect that the performance of \refalgo{algo:mixture_utigsp} should improve as the new noise variance moves away from the initial noise variance $1.0$. We can see in the above plot that the performance of the green curve ($\delta_i=0$) is worst in terms of the Jaccard similarity and SHD of the recovered graph, validating our expectation. Parameter estimation cannot be directly compared since as the variance increases, the norm of the covariance matrix increases, and thus, the overall error in the estimation error increases. We observe that compared to changing the mean (\reffig{fig:mean_shift_var}) increasing the variance gives slightly lower performance gains.}    
		\label{fig:hard_variance_var} 
  \end{subfigure}

\caption{\textbf{Performance of \refalgo{algo:mixture_utigsp} as we change different parameters of interventions}: 
We consider graphs with 6 nodes in this experiment. The mean of all noise distributions without any intervention is $0.0$, and the variance is $1.0$.
The mixture data contains atomic interventions on all nodes and observational data (\emph{half} setting as described in results in \refsec{sec:empirical_results}). The column shows different evaluation metrics, i.e., Parameter Estimation Error, Average Jaccard Similarity, and SHD (see evaluation metric paragraph in \refsec{sec:empirical_results}). From  \reftheorem{theorem:main_identifiability}, we observe that the sample complexity for recovering the parameters of the mixture is inversely proportional to the change in the mean of the noise distribution $\gamma_i^{2}$ and change in the variance of the noise distribution $|\delta_i|$. In this experiment, we vary these two parameters one at a time and empirically validate this observation. 
}
\label{fig:variation_control_params}
\end{figure}



\end{document}